\titlespacing\section{0pt}{0pt}{0pt}
\titlespacing\subsection{0pt}{0pt}{0pt}
\titlespacing\paragraph{0pt}{0pt}{4pt}
\theoremstyle{plain}
\theoremstyle{definition}
\theoremstyle{remark}
\newtheorem{remark}{\textbf{\textit{Remark}}} 
\newtheorem{example}{\textbf{\textit{Example}}}
\definecolor{mydarkblue}{rgb}{0,0.08,0.45}
\newcommand{\tabref}[1]{Table~\ref{#1}}
\newcommand{\figref}[1]{Fig.~\ref{#1}}
\newcommand{\secref}[1]{\S\ref{#1}}
\newcommand{\appref}[1]{Appx.~\ref{#1}}
\newcommand{\thmref}[1]{Thm.~\ref{#1}}
\newcommand{\algname}{\textsc{ThoughtComm}\xspace}
\title{Thought Communication in Multiagent Collaboration}
\author{%
  Yujia Zheng$^{1,2}$ \quad
  Zhuokai Zhao$^{2}$ \quad
  Zijian Li$^{3}$ \quad
  Yaqi Xie$^{1}$ \\
  \textbf{Mingze Gao}$^{2}$ \quad
  \textbf{Lizhu Zhang}$^{\dagger,2}$ \quad
  \textbf{Kun Zhang}$^{\dagger,1,3}$  \vspace{0.3em}\\
  $^1$ CMU \quad $^2$ Meta AI \quad $^3$ MBZUAI\\
  \texttt{\{yujiazh, kunz1\}@cmu.edu} \quad \texttt{\{zhuokai, lizhu\}@meta.com}
}
\begin{document}

\doparttoc 
\faketableofcontents 

\maketitle

\footnotetext[2]{Equal advising.}

\begin{abstract}
Natural language has long enabled human cooperation, but its lossy, ambiguous, and indirect nature limits the potential of collective intelligence. While machines are not subject to these constraints, most LLM-based multi-agent systems still rely solely on natural language, exchanging tokens or their embeddings. To go beyond language, we introduce a new paradigm, \textit{thought communication}, which enables agents to interact directly mind-to-mind, akin to telepathy. To uncover these latent thoughts in a principled way, we formalize the process as a general latent variable model, where agent states are generated by an unknown function of underlying thoughts. We prove that, in a nonparametric setting without auxiliary information, both shared and private latent thoughts between any pair of agents can be identified. Moreover, the global structure of thought sharing, including which agents share which thoughts and how these relationships are structured, can also be recovered with theoretical guarantees. Guided by the established theory, we develop a framework that extracts latent thoughts from all agents prior to communication and assigns each agent the relevant thoughts, along with their sharing patterns. This paradigm naturally extends beyond LLMs to all modalities, as most observational data arise from hidden generative processes. Experiments on both synthetic and real-world benchmarks validate the theory and demonstrate the collaborative advantages of thought communication. We hope this work illuminates the potential of leveraging the hidden world, as many challenges remain unsolvable through surface-level observation alone, regardless of compute or data scale.
\end{abstract}
\section{Introduction}\label{sec:intro}
Natural language has enabled human collaboration at scale, but it also imposes fundamental limitations. 
While powerful, language is inherently sequential, ambiguous, and imprecise, offering only an indirect and fragmented reflection of thought~\citep{humboldt1988language}. 
This constraint is deeply rooted in human cognition, which lacks direct channels for transmitting mental content. 
Machines, however, are not subject to the same physical constraints of speech or perception.
This difference may be one of the central reasons why superhuman intelligence is possible.
Every transformative achievement, from scientific discovery to societal progress, relies on collaboration. 
Likewise, superhuman intelligence will require not only individual reasoning beyond human capability but also collective reasoning beyond human coordination~\citep{vinge1993coming}.
This calls for a new form of communication that transcends the limits of language.

However, existing large language model (LLM)-based multi-agent systems (MAS) rely on natural language as the medium of communication, exchanging information via tokens or their embeddings~\citep{du2023improving, liang2023encouraging, pham2023let,zhang2024cut, zeng2025s, wang2025agentdropout}. 
These systems typically assume that multiple LLM agents exchange natural language messages to convey internal ideas and coordinate toward a shared goal. 
However, natural language remains fundamentally limited in its ability to express the underlying latent thoughts that drive reasoning and decision making. 
As a result, current systems remain restricted by the bottlenecks of language, limiting their potential for superhuman collaboration. 
Indeed, recent empirical analyses~\citep{cemri2025multi, hu2025position} highlight that many failures in inter-agent collaboration stem from vague message specification and inter-agent misalignment, both ultimately caused by the indirect nature of lossy language-based communication. 
Then, the core question reveals itself:
\begin{center}
     \textit{What form of communication goes beyond the limits of language?}
\end{center}
To answer this, we turn to the idea of communication through latent \textit{thoughts}. 
Nothing is more direct than transmitting what one truly thinks, i.e., \textit{telepathy}.
Just as human actions are guided by internal mental states, agents likely operate based on latent representations that encode goals, beliefs, and reasoning. 
If these could be identified, agents could share them directly, bypassing the ambiguity and distortion of language. 
This enables a fundamentally different mode of communication, based not on the exchange of surface tokens or their embeddings, but on the direct transfer of intent and understanding.
Furthermore, in multi-agent settings, some thoughts are intended to be broadly shared, while others are inherently private or uniquely tailored to certain individual agents.
Revealing both the latent thoughts and their structural organization allows agents to better detect alignment, resolve conflicts, and integrate diverse reasoning paths.

\textbf{Contributions:} 
We formalize this idea by introducing a latent generative model for inter-agent communication. 
Specifically, we assume that the model states $H_t$ of all agents before communication round $t$ are generated from a set of latent thoughts $Z_t$ through an unknown function $f$, such that $H_t = f(Z_t)$. 
We establish both a nonparametric identifiability result that guarantees recovery of latent thoughts, and a general framework that facilitates direct mind-to-mind communication.

\textit{Theoretically}, we prove that in a general nonparametric setting, both shared and private latent thoughts can be identified from hidden states under a sparsity regularization. 
Our identifiability result ensures that the recovered latent representations reflect the true internal structure of agent reasoning. 
Moreover, we show that the structures between thoughts and individual agents can be reliably recovered, enabling a provable correspondence between agents and their cognitive content. 
Experiments on various synthetic environments confirm the validity of the theory.

\textit{Practically}, we develop a principled framework for latent communication among agents. 
Guided by the theory, we implement a sparsity-regularized autoencoder to extract latent thoughts from agent hidden states and infer the underlying mapping between agents and these thoughts. 
Each agent is equipped with a set of inferred thoughts, along with the structure of how each thought is shared. 
This allows agents not only to understand what others are thinking but also to reason about which thoughts are mutually held or privately maintained. 
Experiments across diverse models and scenarios demonstrate that communication beyond language directly benefits collaboration among LLM agents.

\section{Problem Formulation}\label{sec:prelim}
\begin{wrapfigure}{r}{0.43\textwidth}
  \centering
  \vspace{-38pt}
  \includegraphics[width=0.42\textwidth]{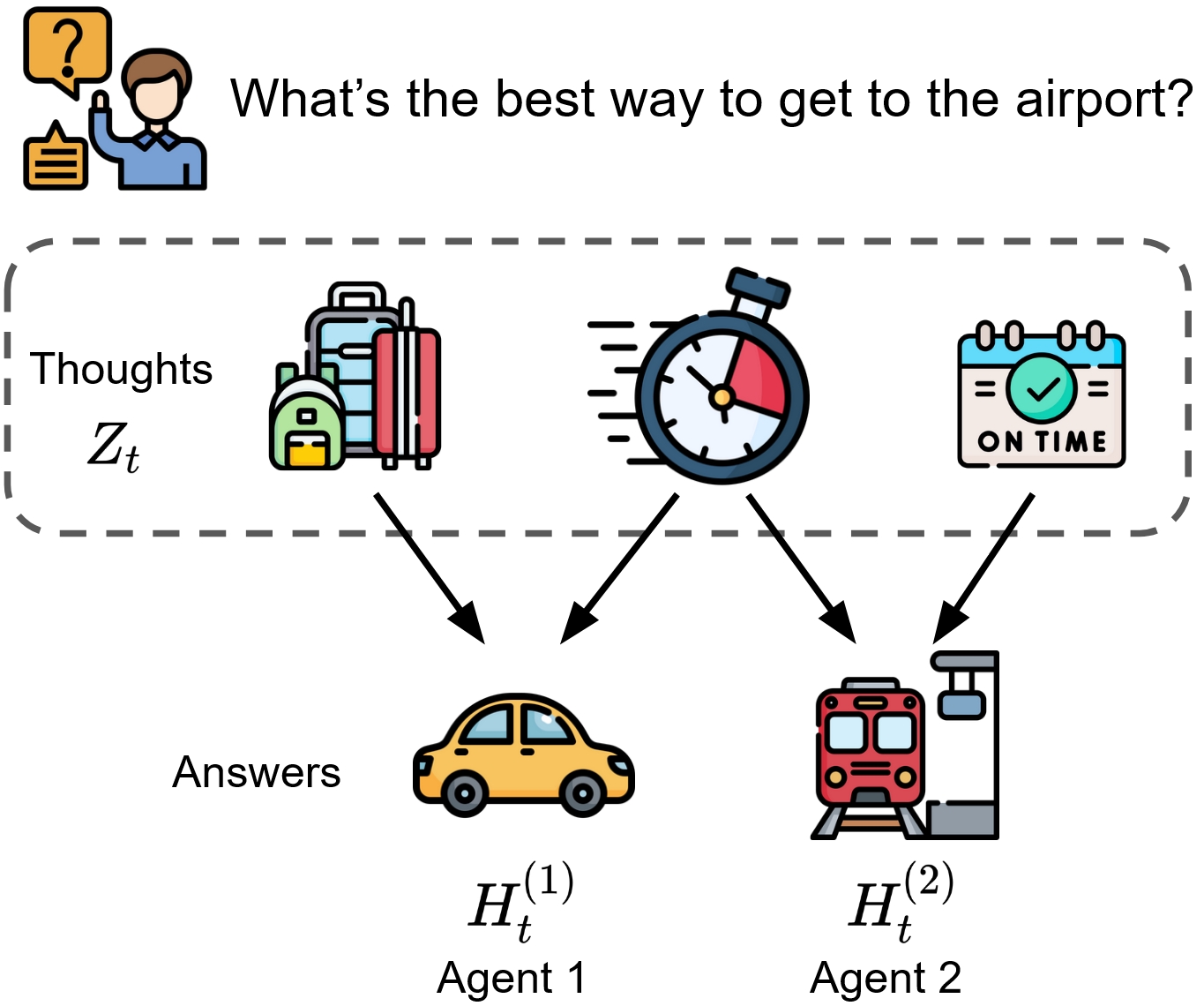}
  \vspace{-8pt}
  \caption{Each agent answers the same question by selecting a subset of latent thoughts $Z_t$. Agent 1 chooses a \textit{car~\includegraphics[width=3mm]{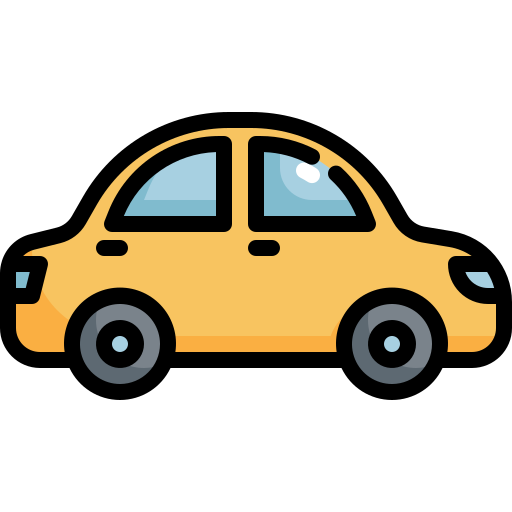}} based on \textit{carrying luggage~\includegraphics[width=3mm]{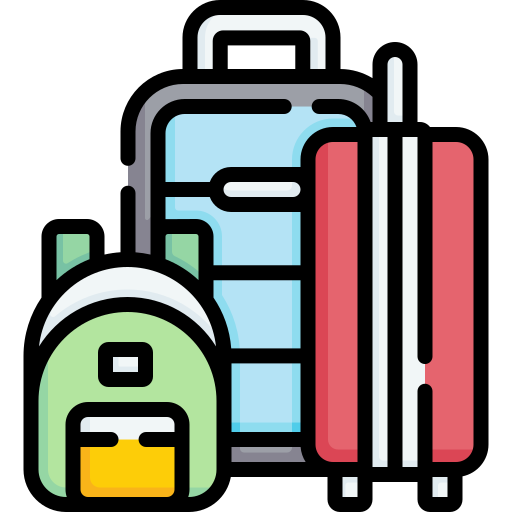}}, while Agent 2 selects a \textit{train~\includegraphics[width=3mm]{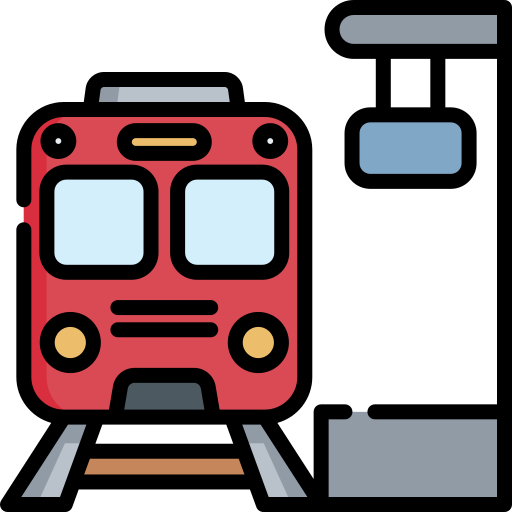}} for \textit{schedule punctuality~\includegraphics[width=3mm]{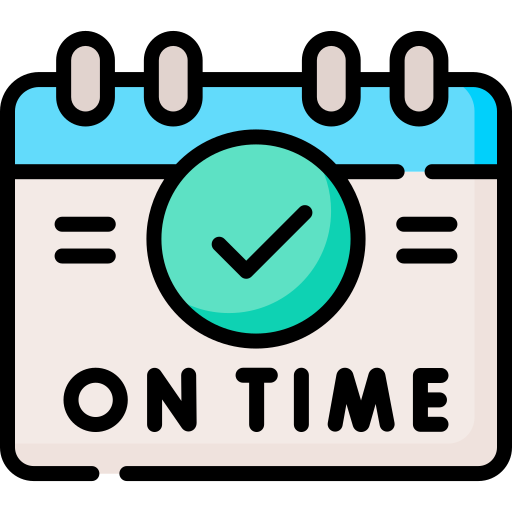}}. Both share the thought of \textit{speed~\includegraphics[width=3mm]{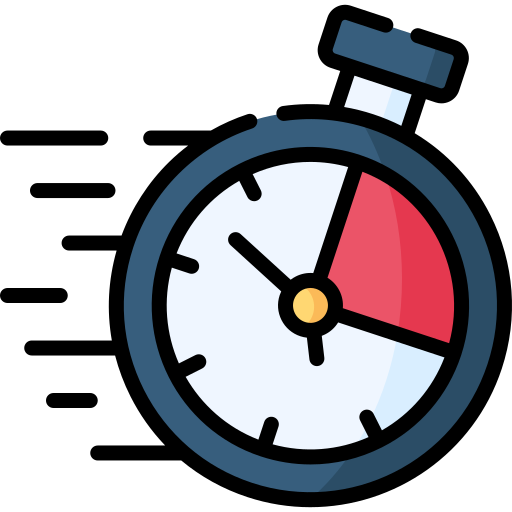}}. 
  }
  \label{fig:example}
  \vspace{-10pt}
\end{wrapfigure}
In this section, we formalize the data-generating process behind agent responses, providing the foundation for our theoretical analysis.

\paragraph{Data-generating process.} 
We illustrate the data-generating process in \figref{fig:example} and formalize it as:
\begin{equation}\label{eq:concat_h}
    Z_{t} \sim P_{z}, \quad H_{t} = f(Z_{t}),
\end{equation}
where $Z_{t} = (Z_{t,1}, \dots, Z_{t,n_z}) \in \mathbb{R}^{n_z}$ denotes the latent \textit{thoughts} of agents at communication round $t$, and $Z_{t,i} \in \mathbb{R}$ for $i\in [n_z]$ represents a latent variable denoting a single thought. 
Let $n_a$ be the number of agents, at communication round $t$, the global model states\footnote{We refer to this as the \textit{model} state instead of \textit{hidden} state to avoid confusion with the \textit{latent} thoughts $Z_t$. Specifically, model state corresponds to the hidden layer representation of the underlying fundation model.} of all agents are given by 
\begin{equation}
    H_t = (H^{(1)}_{t}, \dots, H^{(n_a)}_{t}) = (H_{t,1},\dots, H_{t,n_h}), 
\end{equation}
where each $H^{(j)}_{t} \in \mathbb{R}^{n_{h_j}}$ summarizes the model states of agent $A_j$ prior to the communication round $t$, and $n_h = \sum_{j\in[n_a]} n_{h_j}$. 
The mapping from latent thoughts to hidden states is governed by an unknown generating function $f$, assumed to be invertible (to preserve information) and twice differentiable (to ensure well-defined gradients), following the literature~\citep{hyvarinen2024identifiability}.

\begin{example}
\figref{fig:example} illustrates the data-generating process. 
In response to the question \textit{What's the best way to get to the airport?} a set of latent thoughts $Z_t$ is considered, including factors such as carrying luggage, speed, and punctuality. 
These thoughts, represented as latent variables $Z_{t,i}$, are mapped through the generating function $f$ to produce each agent's answers, which are summarized by their model states $H_t^{(j)}$. 
For example, Agent 1 emphasizes thoughts related to \textit{luggage~\includegraphics[width=3mm]{figures/luggage.png}} and \textit{speed~\includegraphics[width=3mm]{figures/fast.png}}, resulting in the state $H_t^{(1)}$ that leads to choosing a \textit{car~\includegraphics[width=3mm]{figures/car.png}}. 
Agent 2, influenced by \textit{speed~\includegraphics[width=3mm]{figures/fast.png}} and \textit{schedule punctuality~\includegraphics[width=3mm]{figures/ontime.png}}, forms the state $H_t^{(2)}$ and selects a \textit{train~\includegraphics[width=3mm]{figures/train.png}}. 
This example illustrates how the underlying process $f$ encodes shared and private latent thoughts into agent-specific responses.
\end{example}

\paragraph{The structure of thoughts.}
While prior strategies have focused on communication through language or token embeddings, we propose a fundamentally different paradigm where agents share latent thoughts directly. 
To achieve this, we propose a communication paradigm in which agents access relevant latent thoughts instead of surface-level messages or embeddings. 
Rather than exposing all latent thoughts $Z_t$ to every agent uniformly, we focus on learning the structure of the revealed thoughts so that each agent receives only the most relevant information to its goals and role. 
This requires modeling how thoughts are selectively shared, as some may represent common knowledge useful to many agents, others may be specific or private to individual goals, and some may be irrelevant or even distracting to certain agents. 

We formalize the structural dependency between latent thoughts $Z_t$ and model states $H_t$ through the non-zero pattern of the Jacobian $J_f(Z_t)$, represented as a binary matrix indicating which components of $Z_t$ influence which components of $H_t$:
\begin{equation}
B(J_f) \in \{0,1\}^{n_h \times n_z}, \quad
B(J_f)_{i,j} = 
\begin{cases}
1 & \exists z_t \in \mathcal{Z}_t, J_f(z_t)_{i,j}  \neq 0, \\
0 & \text{otherwise}.
\end{cases}
\end{equation}
The model state of each agent $A_k$ is represented as a slice $H^{(k)}_t = (H_{t,k_l}, \dots, H_{t,k_h})$, where $k \in [n_a]$; and $\{k_l,\dots, k_h\}$ denotes the index range in $H_t$ corresponding to agent $k$. 
We define the set of latent thoughts relevant to agent $A_k$ as
\begin{equation}
Z_{H^{(k)}_t} := \left\{ Z_{t,j} \in Z_t \;\middle|\; \exists\, i \in [k_l, k_h] \text{ such that } B(J_f)_{i,j} \neq 0 \right\}.
\end{equation}
In other words, $Z_{H^{(k)}_t}$ consists of all latent thoughts that influence at least one component of agent $A_k$'s hidden state, as determined by the non-zero pattern of the Jacobian $B(J_f(Z_t))$.

\section{Identifiability Theory}\label{sec:theory}
Before leveraging thought for communication, a critical question arises: how can we ensure that the recovered thoughts correspond to the true ones underlying agent responses? 
To address this, we establish an identifiability theory for reliably recovering the latent thinking process. 
We begin with the \textit{identification of the latent thoughts} (\secref{sec:shared} and \secref{sec:private}), then explore \textit{the structure between thoughts and agents} (\secref{sec:structure}).
All proofs are included in \appref{appx:proofs}.

\subsection{Identifiability of \textit{Shared} Thoughts} \label{sec:shared}
Communication often begins with establishing common ground, which typically requires confirming shared beliefs before addressing disagreements. 
If the shared part of the latent thought can be reliably disentangled from other components, then communication can start from a faithful common basis. 
Our identifiability result guarantees this: by recovering shared latent variables that are not entangled with any others, we ensure that inter-agent communication is grounded in true cognitive overlap.

We first introduce some additional technical notations. 
We define the support subspace $\mathcal{S}_{J_f}$ as the set of matrices $S \in \mathbb{R}^{n_h \times n_z}$ whose nonzero entries are restricted to the nonzero pattern of $J_f(Z_t)$:
\begin{equation}
\mathcal{S}_{J_f} := \left\{ S \in \mathbb{R}^{n_h \times n_z} \;\middle|\; B(J_f)_{i,j} = 0 \Rightarrow S_{i,j} = 0\right\}.
\end{equation}
We further denote $M$ as a matrix with the same nonzero pattern of $m(Z_t)$ in $J_f(Z_t)m(Z_t) = J_{\hat{f}}(\hat{Z}_t)$, and write $\overset{d}{=}$ to denote equality in distribution. 
\begin{restatable}[Identifying the shared thoughts]{theorem}{shared}\label{thm:shared}
    Suppose that for each $i \in [n_x]$, there exist points where the Jacobians $J_f(Z_t)_{i,\cdot}$ span the support subspace $\mathcal{S}_{{J_f}_{i,\cdot}}$, and that $(J_f(Z_t) \mathcal{M})_{i,\cdot} \in \mathcal{S}_{{J_{\hat{f}}}_{i,\cdot}}$ at those points. If $H_t \overset{d}{=} \hat{f}(\hat{Z}_t)$ for a model $(\hat{f}, \hat{Z}_t)$ following \secref{sec:prelim} with $\ell_0$ regularization on $J_{\hat{f}}$, then for any pair of agents $A_i$ and $A_j$ at round $t$, there exists a permutation $\pi$ over $[n_z]$ such that
\(
\frac{\partial Z_i}{\partial \hat{Z}_{\pi(j)}} = 0
\)
for any $Z_i \in Z_{H^{(i)}_t} \cap Z_{H^{(j)}_t}$ and any $Z_j \in (Z_{H^{(i)}_t} \cup Z_{H^{(j)}_t}) \setminus (Z_{H^{(i)}_t} \cap Z_{H^{(j)}_t})$.
\end{restatable}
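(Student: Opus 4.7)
The plan is to convert the observational equivalence $H_t \overset{d}{=} \hat f(\hat Z_t)$ into a structural constraint on the indeterminacy map between the true and recovered latents, and then to exploit the $\ell_0$ regularization on $J_{\hat f}$ to force that map to be permutation-like. Concretely, I would introduce the diffeomorphism $h := f^{-1}\circ\hat f$, which is well-defined because $f$ and $\hat f$ are both invertible and twice differentiable, and which satisfies $Z_t = h(\hat Z_t)$. Applying the chain rule to $\hat f = f \circ h$ yields the key Jacobian identity $J_{\hat f}(\hat Z_t) = J_f(Z_t)\, J_h(\hat Z_t)$, matching the theorem's setup with $m := J_h$. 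The claim then reduces to showing that $J_h$ has a permutation-like support pattern and that this permutation is compatible with the agent decomposition of the row indices of $H_t$.

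The proof itself proceeds in three steps. \emph{Step 1 (span.)} Exploit the row-wise span condition: at points where $J_f(Z_t)_{i,\cdot}$ spans the support subspace $\mathcal{S}_{{J_f}_{i,\cdot}}$, the containment $(J_f(Z_t)\,\mathcal{M})_{i,\cdot} \in \mathcal{S}_{{J_{\hat f}}_{i,\cdot}}$ rules out accidental cancellations and forces the column supports of $m$ to be compatible with the row supports of $J_f$ and $J_{\hat f}$. \emph{Step 2 (sparsity.)} Invoke $\ell_0$ minimality: since $(\hat f,\hat Z_t)=(f,Z_t)$ is feasible, the recovered model satisfies $\|B(J_{\hat f})\|_0 \le \|B(J_f)\|_0$. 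Combining this bound with the invertibility of $m$ (inherited from $h$) and the compatibility from Step 1, a counting argument forces $m(Z_t)$ to have exactly one nonzero in each row and each column, defining the permutation $\pi$ on $[n_z]$. \emph{Step 3 (lift.)} For a shared thought $Z_k \in Z_{H^{(i)}_t} \cap Z_{H^{(j)}_t}$, the $k$-th column of $J_f$ has nonzeros in the row-blocks of both agents $A_i$ and $A_j$, so Step 1 forces $\pi$ to send $k$ to a column of $J_{\hat f}$ with the same agent-block support. An exclusive thought $Z_\ell$ necessarily maps to a column with strictly different agent support, and the single-nonzero-per-row structure of $m$ then yields $\partial Z_k / \partial \hat Z_{\pi(\ell)} = (J_h)_{k,\pi(\ell)} = 0$.

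The main obstacle is Step 2: converting a global $\ell_0$ bound together with the row-wise span condition into the conclusion that $m$ is a genuine permutation in its nonzero pattern rather than merely a sparse invertible matrix. Row-by-row analysis alone provides only local support constraints, so the argument must pair invertibility of $m$ with a careful global counting argument, likely in the spirit of Hall- or matroid-type reasoning on the bipartite support graph between true and recovered latents. Once $m$ is shown to be permutation-like, the translation to the agent-structural statement is essentially bookkeeping, because the column permutation $\pi$ preserves column supports while the agent decomposition is a fixed partition of the rows of the Jacobians.
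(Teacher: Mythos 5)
Your overall architecture matches the paper's: define the indeterminacy map $h$ between latents, obtain $J_{\hat f}=J_f\,m$ with $m=J_h=\partial Z/\partial\hat Z$, use the span condition to constrain the supports of the rows of $m$, use invertibility of $m$ with a Hall-type matching to extract a permutation $\pi$, and use the $\ell_0$ bound to upgrade the resulting support inclusion to an equivalence. Up to that point you are on the paper's track.

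The genuine gap is your Step 2. You claim that $\ell_0$ minimality plus invertibility forces $m$ to have exactly one nonzero per row and per column, i.e., that $m$ is permutation-like in its support. This is not provable from the stated assumptions, and it is in fact false in general: if two latent thoughts have identical column supports in $B(J_f)$ (they influence exactly the same observed coordinates), then any invertible mixing of those two coordinates leaves every row support of $J_{\hat f}$ unchanged, so the $\ell_0$ penalty cannot detect it and $m$ retains a dense $2\times 2$ block. Were your claim true, every latent would be recovered up to permutation and coordinatewise reparametrization, which is precisely the kind of full identifiability the paper states is impossible without auxiliary information; the theorem is deliberately weaker. What the argument actually yields is only (i) a system of distinct representatives $m_{j,\pi(j)}\neq 0$ from Hall's theorem, and (ii) the row-wise equivalence $(J_f)_{i,j}\neq 0\Leftrightarrow(J_{\hat f})_{i,\pi(j)}\neq 0$. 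Your Step 3 then cannot invoke a ``single nonzero per row'' structure of $m$. The correct finish (and the paper's) is a contradiction argument on the specific entry: for a shared thought $Z_k$ and an exclusive thought $Z_\ell$, if $m_{k,\pi(\ell)}\neq 0$, then the span condition places $\pi(\ell)$ in the support of the $J_{\hat f}$ rows of \emph{both} agents' blocks (since column $k$ of $J_f$ hits both blocks), and the equivalence (ii) transports this back to $(J_f)_{\cdot,\ell}$ being nonzero in both blocks, contradicting exclusivity of $Z_\ell$; hence $m_{k,\pi(\ell)}=0$. You gesture at the right block-support bookkeeping in Step 3, but as written the conclusion rests on the unprovable permutation structure of $m$ rather than on this targeted contradiction.
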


\paragraph{Interpretation and discussion.} 
Intuitively, \thmref{thm:shared} ensures that, up to permutation, the recovered shared thoughts between any pair of agents are disentangled from all other latent variables in the system. 
The permutation reflects the standard relabeling indeterminacy common to identifiability results~\citep{hyvarinen2024identifiability, moran2025towards}. 
For instance, in \figref{fig:example}, we can make sure that the recovered thought \textit{speed~\includegraphics[width=3mm]{figures/fast.png}} will not be mixed with others such as \textit{luggage~\includegraphics[width=3mm]{figures/luggage.png}} or \textit{schedule punctuality~\includegraphics[width=3mm]{figures/ontime.png}}.
Without this guarantee, any recovered thought can be a mixture of any other thoughts, since the unknown generating function $f$ is essentially a mixing procedure.
Thus, this disentanglement implies the recovery of the target shared components, provided that the generating function is invertible and thus information-preserving. 
This has practical implications: given any group of agents, we can decompose them into pairs, each yielding identifiable shared thoughts. 
By composing the recovered components across different pairs, we reconstruct the common cognitive basis and reveal how thoughts are distributed across agents, including the degree of agreement, which is essential for enabling trustworthy and informative latent communication.

\looseness=-1
\paragraph{Assumption.}
The assumption has been widely adopted in the identifiability literature~\citep{lachapelle2021disentanglement, zhengidentifiability}, which eliminates degenerate cases where the population is too limited for the Jacobian to even reflect the dependency structure.
It requires that the generating function $f$ varies sufficiently across the population so that there exist several points for the Jacobian to span the support subspace $\mathcal{S}_{{J_f}_{i,\cdot}}$. 
Requiring $(J_f(Z_t)M)_{i,:} \in \mathcal{S}_{{J_{\hat{f}}}_{i,\cdot}}$ holds at these points is also mild due to $({J_f(Z_t)m(Z_t)})_{i,\cdot} = J_{\hat{f}}(\hat{Z}_t)_{i,\cdot}$, especially in the asymptotic regime where identifiability is defined. 

\subsection{Identifiability of \textit{Private} Thoughts} \label{sec:private}
In \thmref{thm:shared}, we established the identifiability of shared thoughts, providing a guarantee for recovering the underlying common ground between agents. 
However, effective collaboration is not solely about enforcing consensus or resolving disagreements. 
In fact, homogeneity can be counterproductive in the long term~\citep{prat2002should}. 
Just as humans value cognitive diversity as a source of novelty and innovation, different agents may contribute unique perspectives that are essential for solving complex tasks.
For instance, in a collaborative planning task, one agent may recognize rare constraints based on its prior experience that others overlook. 
Preserving such private thoughts can lead to better overall solutions through complementary reasoning. 
Motivated by this, we now extend our theoretical analysis to show that private thoughts can also be identified:
\begin{restatable}[Identifying the private thoughts]{theorem}{private}\label{thm:private}
    Suppose the assumption in \thmref{thm:shared} holds. If $H_t \overset{d}{=} \hat{f}(\hat{Z}_t)$ for a model $(\hat{f}, \hat{Z}_t)$ following \secref{sec:prelim} with $\ell_0$ regularization on $J_{\hat{f}}$, then for any pair of agents $A_i$ and $A_j$ at round $t$, there exists a permutation $\pi$ over $[n_z]$ such that
    \(
    \frac{\partial Z_i}{\partial \hat{Z}_{\pi(j)}} = 0
    \)
    for any $Z_i \in Z_{H^{(i)}_t} \setminus Z_{H^{(j)}_t}$ and any $Z_j \in Z_{H^{(j)}_t}$.
\end{restatable}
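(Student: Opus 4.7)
The plan is to imitate the structural argument of Theorem 1, now directed at the private-to-$i$ coordinates of $m$ rather than the shared ones. Let $S := Z_{H^{(i)}_t} \cap Z_{H^{(j)}_t}$, $P_i := Z_{H^{(i)}_t} \setminus Z_{H^{(j)}_t}$, and $P_j := Z_{H^{(j)}_t} \setminus Z_{H^{(i)}_t}$. First I would inherit from the proof of Theorem 1 the existence of a column permutation $\pi$ under which $B(J_{\hat{f}})$ and $B(J_f)$ coincide, so that for every index $b$ the columns $J_{\hat{f}}(\hat{Z}_t)_{\cdot,\pi(b)}$ and $J_f(Z_t)_{\cdot,b}$ share the same row support. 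Theorem 1 also yields, under the same $\pi$, that $m(Z_t)_{a,\pi(c)} = 0$ whenever $a \in S$ and $c \in P_i \cup P_j$; this block-triangular structure will be reused below to peel off shared contributions.

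Next, fix $k \in P_i$ and $b \in Z_{H^{(j)}_t}$; the target is $m(Z_t)_{k,\pi(b)} = 0$. Writing $J_f m = J_{\hat{f}}$ entrywise gives
\[
J_{\hat{f}}(\hat{Z}_t)_{i',\pi(b)} \;=\; \sum_{r \in \sigma_{i'}} J_f(Z_t)_{i',r}\, m(Z_t)_{r,\pi(b)},
\]
where $\sigma_{i'}$ is the support of row $i'$ of $J_f$. Whenever $J_f(Z_t)_{i',b} = 0$, the sparsity matching from the previous paragraph forces the left-hand side to vanish, delivering a linear constraint on $\{m_{r,\pi(b)} : r \in \sigma_{i'}\}$. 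The plan is to select rows $i' \in [k_l^{(i)}, k_h^{(i)}]$ with $k \in \sigma_{i'}$ but $b \notin \sigma_{i'}$: such rows confine $\sigma_{i'}$ to $S \cup P_i$, and invoking Theorem 1 to kill the terms with $r \in S$ collapses the constraint to $\sum_{r \in P_i \cap \sigma_{i'}} J_f(Z_t)_{i',r}\, m_{r,\pi(b)} = 0$. The spanning condition, applied row-by-row, allows the coefficients $J_f(Z_t)_{i',r}$ to range freely on $\sigma_{i'}$ across the population, forcing $m_{r,\pi(b)} = 0$ for every $r \in P_i \cap \sigma_{i'}$, and in particular for $r = k$. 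Sweeping $(k,b)$ over $P_i \times Z_{H^{(j)}_t}$ then yields the theorem.

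The main obstacle is guaranteeing that suitable rows $i'$ exist for every pair $(k,b)$. When $b \in P_j$, the condition $b \notin \sigma_{i'}$ is automatic for any $i' \in [k_l^{(i)}, k_h^{(i)}]$, and only $k \in \sigma_{i'}$ needs to be arranged, which follows from $k \in Z_{H^{(i)}_t}$. When $b \in S$, however, column $b$ of $J_f$ can be widely supported inside agent $i$'s range, and one must argue that its support does not swallow the support of column $k$ there. The correct way to handle this is to read the spanning assumption as saying each latent acts on a strict, non-nested subset of coordinates, so that $\operatorname{supp}(\text{col } k) \not\subseteq \operatorname{supp}(\text{col } b)$ restricted to agent $i$'s rows; combining this with Theorem 1's cancellation of the shared block is the delicate step that closes the argument.
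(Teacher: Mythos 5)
Your treatment of the case $Z_j \in Z_{H^{(j)}_t}\setminus Z_{H^{(i)}_t}$ is sound and essentially matches the paper's: there, $b \notin \sigma_{i'}$ holds for every row of agent $i$, so the support equivalence $B(J_{\hat f}) = B(J_f)P$ inherited from Theorem~\ref{thm:shared} immediately forces $B(J_{\hat f})_{H^{(i)}_t,\pi(b)} = 0$ while $M_{k,\cdot} \in \mathcal{S}_{{J_{\hat f}}_{H^{(i)}_t,\cdot}}$, giving $M_{k,\pi(b)}=0$ by contradiction. (A minor caveat: your ``spanning forces each coefficient to vanish'' step treats $m(Z_t)$ as constant across the population, whereas it is point-dependent; the paper avoids this by arguing at the level of the constant support matrix $M$ and the subspaces $\mathcal{S}_{{J_{\hat f}}_{i,\cdot}}$ rather than pointwise linear systems.)

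The genuine gap is the case $b \in S := Z_{H^{(i)}_t}\cap Z_{H^{(j)}_t}$, which you correctly flag as delicate but then close with an assumption that is not available. You need a row $i'$ of agent $i$ with $k \in \sigma_{i'}$ and $b \notin \sigma_{i'}$, and you propose to guarantee this by ``reading the spanning assumption as saying each latent acts on a strict, non-nested subset of coordinates.'' The spanning condition says nothing of the sort: it is a statement about the \emph{rows} of $J_f$ spanning their support subspaces over the population, and it is fully compatible with every hidden unit of agent $i$ that depends on the private thought $Z_k$ also depending on the shared thought $Z_b$ (nested column supports). In that situation your construction produces no usable constraint, so the argument does not go through. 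The paper closes this case by a different mechanism: Theorem~\ref{thm:shared} already gives $M_{i',\pi(j')}=0$ for every $i' \in S$ and every $j' \notin S$, so the shared block $Z_S$ depends only on $\hat Z_{\pi(S)}$; since this is a square subsystem of an invertible map, $\hat Z_{\pi(S)}$ is in turn a function of $Z_S$ alone, and the functional independence of the coordinate $Z_k \in Z_{H^{(i)}_t}\setminus Z_{H^{(j)}_t}$ from $Z_S$ then yields $\partial Z_k/\partial \hat Z_{\pi(b)} = 0$ without any non-nestedness requirement. To repair your proof you should replace the column-support argument for $b \in S$ with this invertibility-of-the-shared-subsystem step (or an equivalent block-structure argument), rather than strengthening the hypotheses.
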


\paragraph{Interpretation and discussion.}
Similar to \thmref{thm:shared}, \thmref{thm:private} adopts a pairwise perspective and provides guarantees for recovering the hidden private thoughts of any given agent. 
Specifically, for any pair of agents, it shows that the private component of either agent can be disentangled from all remaining latent variables. 
For instance, in \figref{fig:example}, recovered latent variables corresponding to the thought \textit{being able to carry luggage~\includegraphics[width=3mm]{figures/luggage.png}} -- which may explain Agent 1's choice of \textit{car~\includegraphics[width=3mm]{figures/car.png}} -- is not entangled with unrelated thoughts like \textit{speed~\includegraphics[width=3mm]{figures/fast.png}} or \textit{schedule punctuality~\includegraphics[width=3mm]{figures/ontime.png}}, which influence Agent 2's preference for the \textit{train~\includegraphics[width=3mm]{figures/train.png}}. 
Without such disentanglement, we risk misattributing the decision to an incorrect or irrelevant latent cause, leading to misalignment in communication.

This again implies that, under invertibility, the true private thoughts can be recovered.
By composing the results across different agent pairs, we can infer how agent-specific a given thought is. 
For example, by analyzing all pairwise decompositions in a large group, we can identify thoughts that are truly unique to individual agents, capturing insights that would otherwise be lost due to their rarity or lack of popularity. 
This connects naturally to the classical long-tail phenomenon: some thoughts may be infrequent, but they carry critical value. 
Our theory ensures that these less common but meaningful components are not discarded, enabling inclusive communication and collaboration among agents.

\subsection{The Structure of Thoughts} \label{sec:structure}
Having established the identifiability of both shared and private thoughts, we now turn to a deeper question: how are these thoughts structurally organized across agents? 
That is, beyond identifying each thought, can we also identify which agents hold which thoughts? 
In many scenarios, especially those involving coordination, it is not enough to only know the content of internal reasoning. 
We must also know how that reasoning is distributed across individuals. 
We formalize this in \thmref{thm:structure}:
\begin{restatable}[Identifying the structure of thoughts]{theorem}{structure}\label{thm:structure}
    Suppose the assumption in \thmref{thm:shared} holds. If $H_t \overset{d}{=} \hat{f}(\hat{Z}_t)$ for a model $(\hat{f}, \hat{Z}_t)$ following \secref{sec:prelim} with $\ell_0$ regularization on $J_{\hat{f}}$, then the nonzero pattern $B(J_f)$ is identifiable up to relabelling, i.e., $B(J_{\hat{f}}) = B(J_f)P$ for a permutation matrix $P$.
\end{restatable}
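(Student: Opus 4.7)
The plan is to reduce \thmref{thm:structure} to the same ``$\ell_0$-minimization forces a generalized column permutation'' principle that underlies \thmref{thm:shared} and \thmref{thm:private}, now applied to the full Jacobian pattern rather than to one agent pair. First I would use the invertibility of $f$ and $\hat f$ together with $H_t \overset{d}{=} \hat f(\hat Z_t)$ to obtain a diffeomorphism $h$ with $\hat Z_t = h(Z_t)$. Differentiating the identity $\hat f \circ h = f$ then yields the chain-rule relation $J_{\hat f}(\hat Z_t) = J_f(Z_t)\, m(Z_t)$ with $m(Z_t) := J_h(Z_t)^{-1}$ invertible almost everywhere, which is exactly the setting in which the matrix $M$ appears in the assumption preceding \thmref{thm:shared}.

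Next I would sandwich $\|B(J_{\hat f})\|_0$ between two matching bounds. On one side, since $(f, Z_t)$ is itself a feasible candidate for the distributional constraint, the $\ell_0$ regularization on $J_{\hat f}$ forces
\begin{equation*}
\|B(J_{\hat f})\|_0 \leq \|B(J_f)\|_0.
\end{equation*}
On the other side, I would invoke the spanning hypothesis row by row: for each $i \in [n_h]$ there exist evaluation points at which $J_f(Z_t)_{i,\cdot}$ realizes every element of $\mathcal{S}_{{J_f}_{i,\cdot}}$, and at those same points the assumption already guarantees $(J_f(Z_t)\,m(Z_t))_{i,\cdot} \in \mathcal{S}_{{J_{\hat f}}_{i,\cdot}}$. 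Right-multiplication by the invertible matrix $m(Z_t)$ cannot strictly reduce the number of nonzero entries in the $i$-th row except through algebraic cancellations that fail to persist on a positive-measure set, unless $m(Z_t)$ acts as a generalized column permutation, i.e., a matrix with exactly one nonzero entry per row and per column. Aggregating over rows yields the matching lower bound $\|B(J_{\hat f})\|_0 \geq \|B(J_f)\|_0$, so equality must hold and $m(Z_t) = D\, P$ almost everywhere for some nonsingular diagonal $D$ and a single permutation matrix $P$ (common across $Z_t$ by continuity of $m$ and invertibility of $h$).

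Substituting back into the chain-rule identity then gives
\begin{equation*}
B(J_{\hat f}) = B(J_f\, D\, P) = B(J_f)\, P,
\end{equation*}
which is exactly the claim. The hardest step will be the row-wise lower bound: one has to rigorously rule out that a non-permutation invertible $m$ could maintain or reduce the total nonzero count of $J_f\, m$ on a set of positive measure. This is precisely where the spanning condition plays a critical role, since it supplies enough variety among the rows of $J_f$ that no single $m$ can simultaneously compress the support of every row. I would expect this step to re-use the same combinatorial building blocks as \thmref{thm:shared} and \thmref{thm:private}, merely lifted from the pairwise shared/private decomposition to the coordinate-level sparsity pattern of the full Jacobian.
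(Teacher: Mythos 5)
There is a genuine gap in the pivotal step of your lower bound. You argue that unless $m(Z_t)$ is a generalized column permutation $DP$, right-multiplication by it must strictly reduce the nonzero count of some row of $J_f$ on a positive-measure set, and you then conclude $m(Z_t)=DP$ almost everywhere. This dichotomy is false: take $J_f$ with an all-nonzero row (or an all-nonzero block) and $M=\left(\begin{smallmatrix}1&1\\0&1\end{smallmatrix}\right)$; then $J_f M$ has exactly the same support as $J_f$, yet $M$ is invertible and is not a generalized permutation. More importantly, the conclusion $m=DP$ is strictly stronger than the theorem and contradicts the paper's own position that full recovery of all latents (up to permutation and scaling) is impossible in this assumption-light setting -- Theorems 1 and 2 deliberately allow $M$ to mix latents within the shared block or within a private block. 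So the theorem cannot be proved by forcing $m$ to be a generalized permutation; it only asserts equality of the binary support patterns up to a column permutation. Relatedly, even if your sandwich on the \emph{total} count $\|B(J_{\hat f})\|_0=\|B(J_f)\|_0$ went through, equality of aggregate nonzero counts does not yield $B(J_{\hat f})=B(J_f)P$; you need a row-wise support injection realized by a \emph{single} permutation common to all rows.

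The paper's route supplies exactly that missing combinatorial object without ever constraining the values of $M$. From the spanning assumption it deduces that for every $(i,j)$ in the support of $J_f$, the entire row $M_{j,\cdot}$ lies in the support subspace of row $i$ of $J_{\hat f}$. It then applies Hall's marriage theorem to the bipartite graph of the nonzero pattern of the invertible matrix $M$ (invertibility gives the Hall condition $|N(S)|\ge|S|$), producing one permutation $\pi$ with $M_{j,\pi(j)}\neq 0$ for all $j$. Combining these two facts gives the row-wise implication $(J_f)_{i,j}\neq 0 \Rightarrow (J_{\hat f})_{i,\pi(j)}\neq 0$, and the row-wise $\ell_0$ bound $\|(J_{\hat f})_{i,\cdot}\|_0\le\|(J_f)_{i,\cdot}\|_0$ upgrades this injection to an equivalence, which is precisely $B(J_{\hat f})=B(J_f)P$. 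Your chain-rule setup and the use of the feasibility of $(f,Z_t)$ for the upper bound match the paper; the argument you would need to repair is the passage from invertibility of $M$ to a consistent column alignment, which should go through Hall's theorem on the support of $M$ rather than through a rigidity claim about $M$ itself.
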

\paragraph{Interpretation and discussion.}
\thmref{thm:structure} establishes that the structure linking latent thoughts to agents' internal states is identifiable up to permutation. 
In other words, we can recover not only the content of each thought, but also determine which agents hold which thoughts, and which thoughts are shared. 
Returning to \figref{fig:example}, this means we can infer that both agents care about \textit{speed~\includegraphics[width=3mm]{figures/fast.png}} (shared), while only Agent 1 emphasizes \textit{carrying luggage~\includegraphics[width=3mm]{figures/luggage.png}} (private) and only Agent 2 prioritizes \textit{being on time~\includegraphics[width=3mm]{figures/ontime.png}} (private).
This structure-level recovery is crucial: it enables agents to assess not just what others are thinking, but also how similar or different their internal reasoning is, supporting more informed and adaptive communication.
In practical terms, this guarantees that agents can identify points of alignment and divergence without confusion. 
When scaled to larger systems, this enables the reconstruction of a full thought-agent incidence structure, revealing clusters of agreement, regions of conflict, and sources of novel inputs. 
Such structural insights are foundational for building systems that coordinate robustly and interpret each other's intentions with precision.

\subsection{Discussion on Theoretical Contribution}
To the best of our knowledge, this work is the first to consider the latent generative process underlying LLM agent responses and to provide identifiability guarantees for recovering latent thoughts. 
Beyond its novelty in the multi-agent LLM setting, Thms. \ref{thm:shared},~\ref{thm:private}, and~\ref{thm:structure} also present a new contribution to classical identifiability theory. 
Prior work typically focuses on recovering all latent variables (up to standard indeterminacies), with assumptions that go beyond the basic setup that we adopt, such as access to weak supervision~\citep{hyvarinen2019nonlinear, khemakhem2020variational}, specific function classes~\citep{taleb1999source, buchholz2022function}, or structural criteria on the dependency graph~\citep{moran2021identifiable, zhengidentifiability}.

In contrast, our approach takes a completely different route. 
Instead of aiming for global recovery, we focus on pairs of observed variables (agents) and seek to recover as much hidden information as possible from them. 
Since we rely only on basic assumptions and do not use the additional constraints or auxiliary signals commonly adopted in the identifiability literature, full recovery of all latent variables is known to be impossible. 
Therefore, we target a coarser perspective that is still meaningful for communication, such as the shared/private thoughts disentangled by our theorems. 
This is not only practically useful but also theoretically important, as previous methods with global conditions offer no guarantees when their assumptions are even partially violated, while our result still provides alternative guarantees under practical assumptions.
\section{\algname: Multiagent Communication via Thought}


\begin{figure}
    \centering
    \includegraphics[width=0.8\linewidth]{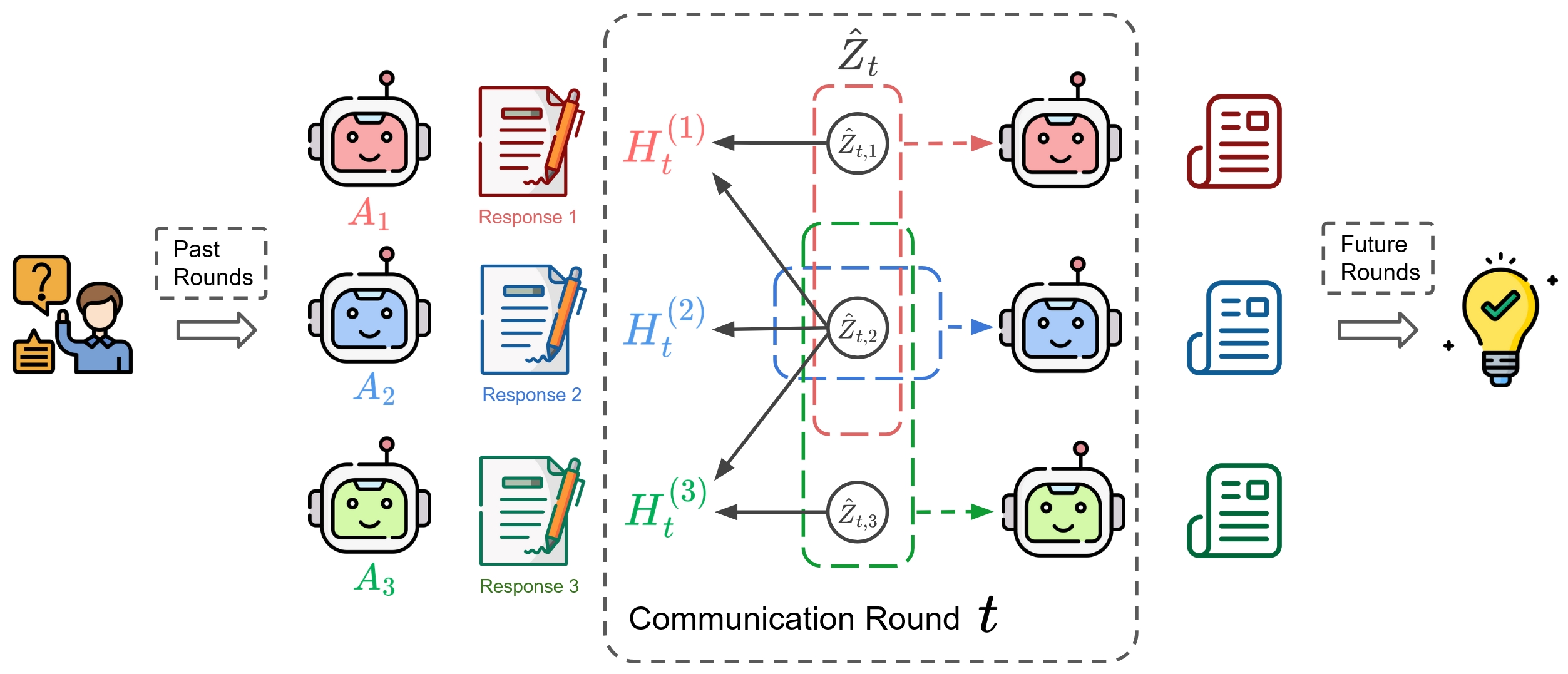}
    \vspace{-0.1in}
    \caption{
    Overview of \algname. At each communication round $t$, agents encode their model states $H_t^{(i)}$ into a shared latent space via a sparsity-regularized autoencoder, yielding latent thoughts $\hat{Z}_t$. Each dimension $\hat{Z}_{t,j}$ is selectively routed to relevant agents based on the recovered dependency structure, allowing agents to identify both shared and private thoughts for reasoning. The corresponding latent thoughts are then injected into each agent model via prefix adaptation to guide the next response. These updated responses form the input to the next round, enabling multi-agent collaboration beyond purely message exchange.
    }
    \label{fig:enter-label}
\vspace{-0.15in}
\end{figure}

Based on the established theory, we propose a practical framework, \algname, for multi-agent collaboration in which agents exchange \textit{thoughts} directly. 
At each communication round $t$, we first encode the agents' model states into a shared latent space that captures their internal thoughts. 
These latent thoughts are then processed and selectively reintegrated into each agent's context based on the structured relationship between thoughts and agents. 
This allows each agent to gain a global sense of what others are thinking, and to distinguish which thoughts are shared or agent-specific.

\subsection{Uncovering the Latent Thoughts}
Each agent $A_i$ maintains a model state $ H^{(i)}_{t} \in \mathbb{R}^{n_{h_i}} $ corresponding to the representation of its last generated token immediately before communication round $t$, contextualizing the text summarizing their own response. 
We concatenate these states from all $n$ agents into a single vector as in Eq. \ref{eq:concat_h}.
Then we aim to uncover the hidden process that generate these states from the latent thought of agents. 
According to the formulation in \secref{sec:prelim}, there exists an underlying process $f$ that generates the agents' responses $H_t$ based on their hidden thoughts $Z_t$, i.e., $H_t = f (Z_t)$.
%

In the proposed framework, the concatenated state $ H_t $ is mapped into a latent space via a \textit{sparsity-regularized autoencoder} with $\ell_1$ regularization on $J_{\hat{f}}$. The resulting latent thoughts $\hat{Z}_t$ are recovered through its encoder $\hat{f}^{-1}$:
%
\begin{equation}
\hat{Z}_t = \hat{f}^{-1}(H_t) \in \mathbb{R}^{n_z}.
\end{equation}
The connection between our estimation $\hat{Z}_t$ and the ground-truth latent thoughts $Z_t$ is built by our identifiability theory established in \secref{sec:theory}. 
The structure of the latent thought $ Z_t $ is governed by the Jacobian $ J_f(Z_t) \in \mathbb{R}^{n_h \times n_z} $, whose non-zero pattern $B_{J_f}$ reveals which latent dimensions are influenced by which agents' states. 
The autoencoder is trained to reconstruct the full state vector:
%
\begin{equation}
\mathcal{L}_{\text{rec}} = \left\| H_t - \hat{f}(\hat{Z}_t) \right\|_2^2 + \left\|J_{\hat{f}}\right\|_1,
\end{equation}
ensuring consistency between $H_t$ and its reconstruction via $\hat{Z}_t$, as well as the required sparsity regularization on the Jacobian. 
This enforces observational equivalence between the estimated and ground-truth processes, which serves as the foundation for identifiability. 
At test time, we use the trained encoder $\hat{f}^{-1}$ to extract latent thoughts $\hat{Z}_t$ from hidden states $H_t$, and leverage the recovered dependency structure $B_{J_{\hat{f}}}$ to determine which latent dimensions of $\hat{Z}_t$ are relevant for each agent.

\subsection{Leveraging the Structure of Thoughts}
To provide personalized access to latent thoughts, we adopt an agreement-based reweighting strategy. 
Specifically, for agent $A_i$ at communication round $t$, we first identify the set of latent thoughts $\hat{Z}_{H^{(i)}_t}$ that influence its model state, i.e., 
$\hat{Z}_{H^{(i)}_t} := \left\{ \hat{Z}_{t,j} \in \hat{Z}_t \;\middle|\; \exists\, q \in [i_l, i_h] \text{ such that } B(J_{\hat{f}})_{q,j} \neq 0 \right\}.$
These latent thoughts are then partitioned into groups based on their level of agreement across agents, measured by the number of agents whose hidden states in $\hat{H}_t$ depend on each latent dimension in thoughts $\hat{Z}_t$. 
Formally, for every $\hat{Z}_{t,j} \in \hat{Z}_{H^{(i)}_t}$, we define its agent agreement as:
\begin{equation}
\alpha_j = \sum_{k=1}^{n_a} \mathbb{I}\left(\hat{Z}_{t,j}\in \hat{Z}_{H^{(k)}_t}\right),
\end{equation}
where $\mathbb{I}(\cdot)$ is the indicator function. 
Latent thoughts are then grouped by their agreement level $\alpha_j$. 

Each group is assigned a distinct weight $w_{\alpha_j}$, reflecting the relevance or generality of these thoughts across agents. 
The new latent representation for agent $A_i$ is constructed by combining all groups
\begin{equation}
    \tilde{Z}^{(i)}_{t} = \operatorname{concat}_{\alpha}(w_{\alpha_j} \cdot  \hat{Z}^{(i)}_{t, \alpha}),
\end{equation}
where $\hat{Z}^{(i)}_{t, \alpha}$ denotes the subset of latent variables in $\hat{Z}_{H^{(i)}_t}$ with agreement level $\alpha$, i.e.,
\begin{equation}
\hat{Z}^{(i)}_{t, \alpha} = \left\{ \hat{Z}_{t,j} \in \hat{Z}_{H^{(i)}_t} \;\middle|\; \alpha_j = \alpha \right\}.
\end{equation}

Intuitively, the recovered dependency structure plays a critical role in shaping how latent thoughts are routed to each agent. After extracting the shared latent space via the sparsity-regularized autoencoder, we apply a structural mask to ensure that each agent only receives the latent dimensions that are relevant to its own internal representation. This filtering directly affects how the injected prefixes are constructed for each agent during the next round of generation. The agreement weights further distinguish different types of relevant thoughts. Although the surface-level messages are broadcast, the actual content used to condition each agent’s reasoning is selectively and adaptively constructed in the latent space, reflecting the personalized structure of shared and private thoughts.

\subsection{Latent Injection via Prefix Adaptation}\label{subsec:prefix_adaptation}
To seamlessly integrate the recovered latent thoughts into agent behavior, we incorporate them into the generation process via \textit{prefix adaptation}. 
For each agent $A_i$, we construct a prefix vector from its personalized latent representation $\tilde{Z}^{(i)}_t$ via a learned adapter function:
\begin{equation}
P^{(i)}_t = g(\tilde{Z}^{(i)}_t) \in \mathbb{R}^{m \times d},
\end{equation}
where $m$ is the prefix length and $d$ is the embedding dimension.
Following~\citet{li2021prefix}, we prepend the resulting prefix $P^{(i)}_t$ to the token embeddings of agent $A_i$ in the next generation step, leveraging the latent thoughts to guide response generation without explicit message passing.

To train the adapter $g$, we inject its output as a prefix and generate a brief continuation (e.g., one sentence), keeping it short to focus on linguistic coherence rather than influencing the actual solution. 
The few generated tokens are compared against a reference using a semantic similarity loss and a standard regularization term that promotes linguistic fluency:
\vspace{-0.1in}
\begin{equation}
\mathcal{L}_{\text{comm}} = \sum_{i=1}^{n_a} \sum_{t=1}^{T} \left[
\left(1 - \cos\left( \bar{\phi}(y^{\text{gen}}_{t,i}), \bar{\phi}(y^{\text{ref}}_{t,i}) \right) \right)
- \log p(y^{\text{gen}}_{t,i} \mid \text{context}_{t,i}, P^{(i)}_t)
\right],
\end{equation}
where $y^{\text{gen}}_{t,i}$ denotes the tokens generated by agent $A_i$ at round $t$, $y^{\text{ref}}_{t,i}$ is a reference from the model without latent communication, $\text{context}_{t,i}$ denotes the dialogue history or prompt available to agent $A_i$, and $P^{(i)}_t$ is the injected prefix produced by the adapter. $\bar{\phi}(\cdot)$ denotes the mean token embedding. The goal is not to replicate the content of baseline generations, but to ensure that the adapter produces latent modifications whose injected effects remain linguistically natural.

\begin{remark}
Since the autoencoder is trained only to reconstruct model states, and the adapter is guided simply to avoid producing semantically absurd responses, both components remain largely \textit{task-agnostic} and can be pretrained once and reused. This modular design allows latent communication to be applied across different tasks without retraining, enabling easy integration into multi-agent generation systems with minimal overhead.
\end{remark}

\section{Experiments}\label{sec:real_exp}
In this section, we conduct both synthetic and real-world experiments across various settings. Part of the implementation details are deferred to \appref{appx:exp}.

\subsection{Synthetic Evaluation}\label{sec:simulation}
\begin{wrapfigure}{r}{0.45\textwidth}
  \centering
  \vspace{-0.55in}
  \begin{minipage}[t]{0.22\textwidth}
    \centering
    \includegraphics[width=\linewidth]{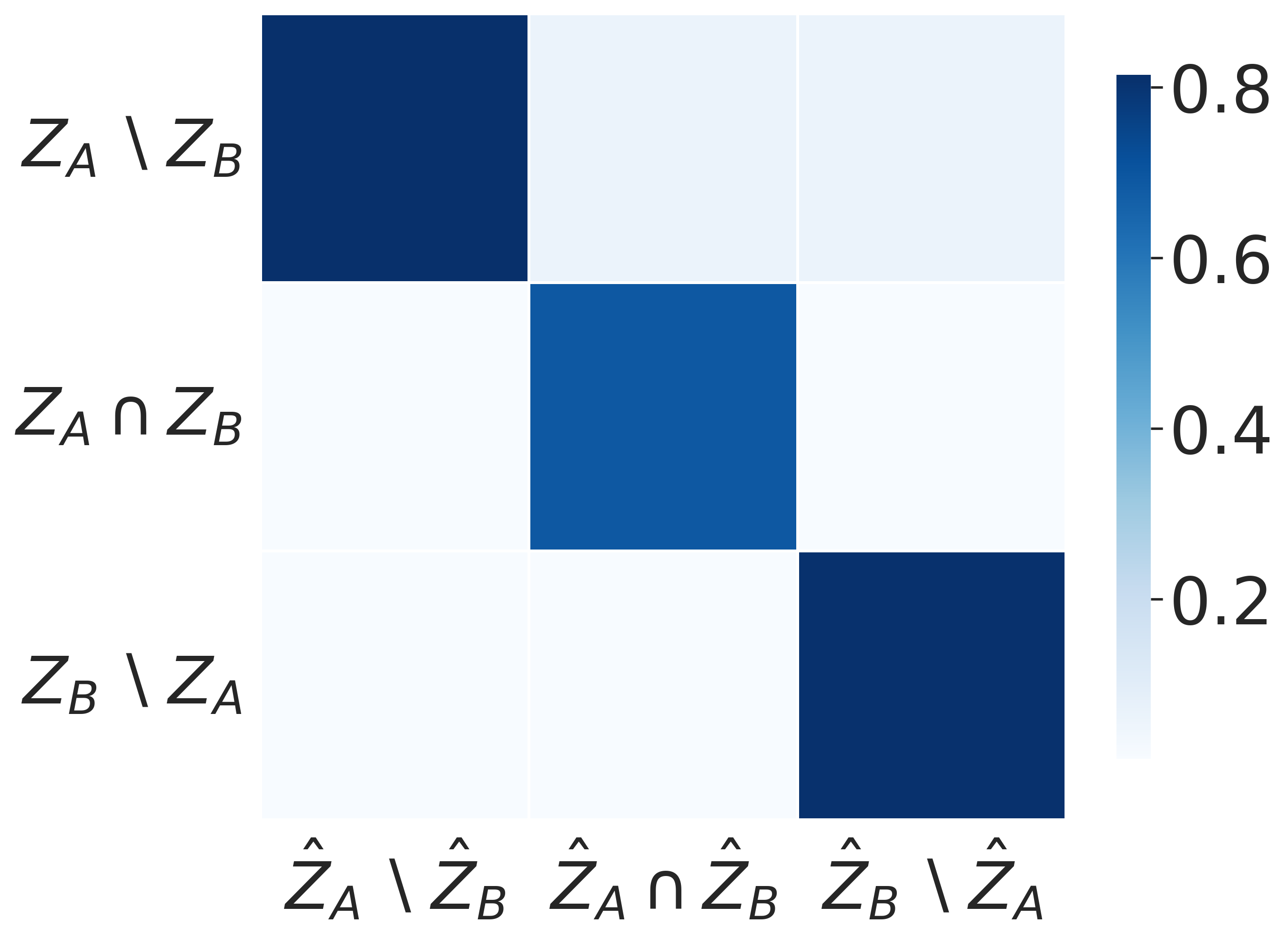}
    \vspace{-0.5em}
    {\footnotesize \textit{Ours}}
  \end{minipage}
  \hfill
  \begin{minipage}[t]{0.22\textwidth}
    \centering
    \includegraphics[width=\linewidth]{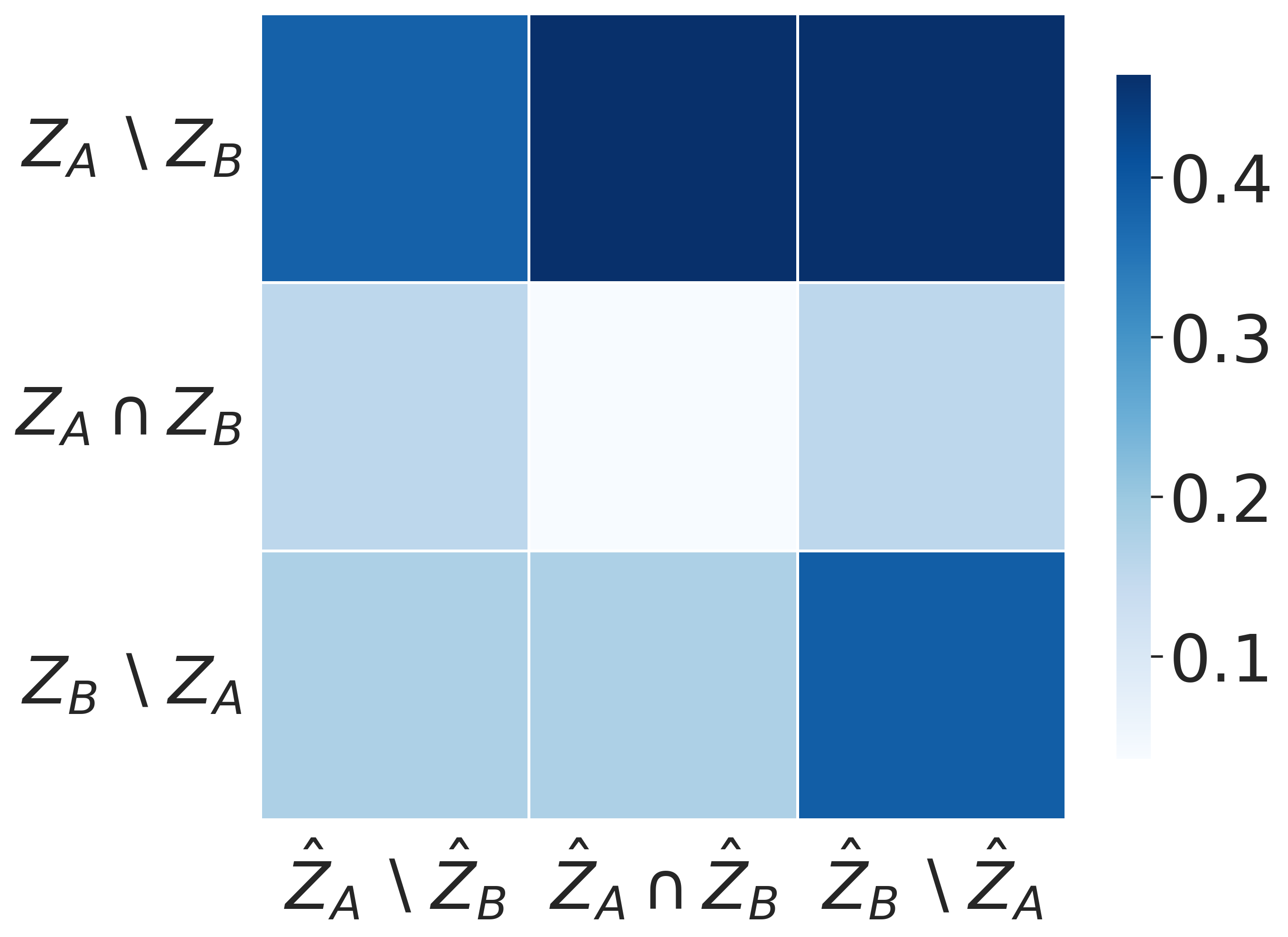}
    \vspace{-0.5em}
    {\footnotesize \textit{Baseline}}
  \end{minipage}
  \vspace{-0.5em}
  \caption{\footnotesize $R^2$ of two models.}
  \label{fig:heatmaps}
  \vspace{-0.15in}
\end{wrapfigure}
We begin with synthetic experiments to validate the identifiability of latent thoughts. 
\begin{wrapfigure}{r}{0.3\textwidth}
  \centering
  \vspace{0.02in}
  \includegraphics[width=0.3\textwidth]{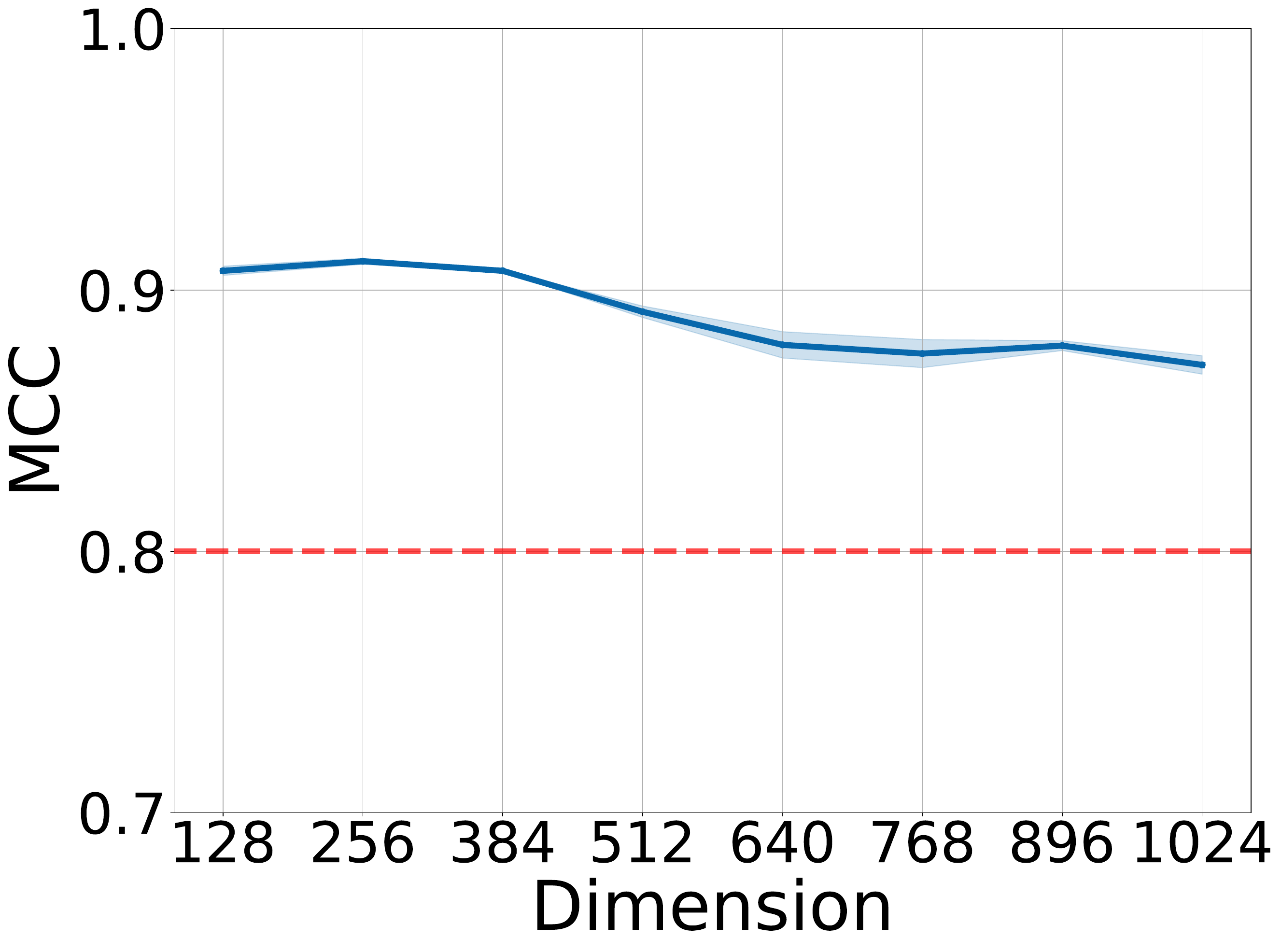}
  \vspace{-0.25in}
  \caption{
    MCC across setups.
  }
  \label{fig:mcc}
\vspace{-0.2in}
\end{wrapfigure}
For the basic setup corresponding to our running example in \figref{fig:example}, we consider two observed variables, $X_A$ and $X_B$, and three latent ones: $Z_A \setminus Z_B$, $Z_B \setminus Z_A$, and $Z_A \cap Z_B$, to evaluate whether shared and private latent variables can be correctly recovered. The datasets are generated by a random invertible transformation from multivariate Laplacian variables. We train a sparsity-regularized autoencoder on these datasets and compute the standard $R^2$ score between each part of the estimated and ground-truth latents. A baseline model without sparsity regularization is also included for comparison.

The results are shown in \figref{fig:heatmaps}. A higher $R^2$ indicates closer correspondence between the estimated latent variables and the matching ground-truth components, and vice versa. 
Our model clearly identifies the shared region $Z_A \cap Z_B$ and the private regions $Z_A \setminus Z_B$ and $Z_B \setminus Z_A$, while the baseline fails to disentangle them.

Beyond the basic setup, we evaluate whether incorporating multiple pairs of observed variables in a complex system enables recovery of most latent variables, as considering all pairs of agents reveals exponentially more information than any single pair alone. Following the identifiability literature, we compute the mean correlation coefficient (MCC) between estimated and ground-truth latents across $8$ settings, with dimensionality ranging from $124$ to $1024$ and equal numbers of latent and observed variables. Results are shown in \figref{fig:mcc}. The \textit{red line} marks the threshold typically considered identifiable when exceeded. Our model consistently recovers most latent variables across all settings, highlighting the global identifiability.

\subsection{Real-World Evaluation}\label{subsec:exp_setup}
Recent empirical analyses~\citep{cemri2025multi, hu2025position} reveal that LLM-based multi-agent systems frequently struggle with reasoning tasks, demonstrating only modest improvements over strong single-agent baselines due to coordination inefficiencies and communication bottlenecks -- challenges that \algname is explicitly expected to address.
Therefore, we evaluate \algname on two widely used math reasoning benchmarks, MATH~\citep{hendrycks2021measuring} and GSM8K~\citep{cobbe2021training} to assess its real-world effectiveness.
For the main experiments in this section, we follow~\citet{subramaniam2025multiagent} by using three agents engaging in two rounds of debate.

\paragraph{Baselines.}
As the proposed \algname introduces an additional training stage, the most direct baseline is Multiagent Finetuning~\citep{subramaniam2025multiagent}, which is the current state-of-the-art in maximizing multi-agent collaboration through specialized roles and multiple finetuning rounds.
We also include single-LLM performance, referred to as "single answer," for comparison.
It is worth noting that there are many other multi-agent collaboration workflows; our objective here is to validate the potential of the proposed paradigm rather than exhaustively compare all possible strategies.

\paragraph{Data pre-processing and evaluation metrics.}
Following~\citet{subramaniam2025multiagent}, we randomly sample 500 examples for fine-tuning the latent communication module, which includes both an autoencoder and an adapter, while reserving another 500 examples for evaluation.
We select the more challenging questions for evaluation (e.g., level-3 complexity in MATH~\citep{hendrycks2021measuring}) when applicable.
Generated responses are parsed and evaluated against the ground truths, with \textit{accuracy} measured as the percentage of correctly generated answers.
To quantify the reliability of these estimates, we also report the standard deviation for each accuracy score.
Beside accuracy, we include a \textit{consensus} score, defined as the proportion of final-round instances where all agents reached a unanimous decision, providing a more direct measure of communication effectiveness.
%
%

\paragraph{Models.}
We evaluated both the baseline methods and \algname on five latest LLMs of varying model sizes, including Llama-3-8B-Instruct~\citep{grattafiori2024llama}, Phi-4-mini-instruct~\citep{abdin2024phi}, Qwen-3-0.6B, Qwen-3-1.7B~\citep{yang2025qwen3}, as well as the Deepseek-R1-distilled-Llama-8B~\citep{guo2025deepseek}.

\paragraph{Main results.}
\begin{table}[t]
\centering
\caption{
    Evaluation results on MATH~\citep{hendrycks2021measuring} and GSM8K~\citep{cobbe2021training} for various methods with five different LLMs.
    Bold numbers indicate the best performance.
}
\vspace{-0.1in}
\setlength{\tabcolsep}{4pt}
\renewcommand{\arraystretch}{1.3}
\resizebox{\linewidth}{!}{
\begin{tabular}{l|l|cc|cc}
\hline
\multirow{2}{*}{\textbf{Base Model}} & \multirow{2}{*}{\textbf{Methods}} & \multicolumn{2}{c|}{\textbf{MATH}} & \multicolumn{2}{c}{\textbf{GSM8K}} \\
\cline{3-6}
 & & Accuracy (\%) & Consensus (\%) & Accuracy (\%) & Consensus (\%) \\
\hline
\multirow{3}{*}{Qwen 3-0.6B} 
& Single Answer & 45.80 ± 2.23 & N/A & 58.20 ± 2.21 & N/A \\
& Multiagent Finetuning & 71.20 ± 2.03 & 90.07 & 70.80 ± 2.03 & 86.40 \\
& \algname & \textbf{85.00 ± 1.60} & \textbf{91.20} & \textbf{75.80 ± 1.92} & \textbf{89.27} \\
\hline
\multirow{3}{*}{Qwen 3-1.7B} 
& Single Answer & 43.60 ± 2.22 & N/A & 67.40 ± 2.10 & N/A \\
& Multiagent Finetuning & 75.80 ± 1.92 & 95.80 & 84.20 ± 1.63 & 96.73 \\
& \algname & \textbf{93.00 ± 1.14} & \textbf{95.93} & \textbf{85.00 ± 1.60} & \textbf{97.87} \\
\hline
\multirow{3}{*}{Phi-4-mini-instruct (3.84B)} 
& Single Answer & 63.80 ± 2.15 & N/A & 81.60 ± 1.73 & N/A   \\
& Multiagent Finetuning & 60.20 ± 2.19 & 78.89 & 82.16 ± 1.71 & 91.24   \\
& \algname & \textbf{74.60 ± 1.95} & \textbf{84.73} & \textbf{84.20 ± 1.63} & \textbf{94.73} \\
\hline
\multirow{3}{*}{LLaMA 3-8B-Instruct} 
& Single Answer & 36.20 ± 2.15 & N/A & 60.80 ± 2.18 & N/A \\
& Multiagent Finetuning & 39.68 ± 2.19 & 68.97 & \textbf{69.20 ± 2.06} & 80.20 \\
& \algname & \textbf{45.60 ± 2.23} & \textbf{74.67} & 68.40 ± 2.08 & \textbf{84.87} \\
\hline
\multirow{3}{*}{DeepSeek-R1-Distill-Llama-8B} 
& Single Answer & 42.60 ± 2.21 & N/A & 65.60 ± 2.12 & N/A \\
& Multiagent Finetuning & 72.40 ± 2.00 & \textbf{82.87} & 76.80 ± 1.89 & 83.13   \\
& \algname & \textbf{82.80 ± 1.69} & 80.72 & \textbf{80.80 ± 1.76} & \textbf{88.13} \\
\hline
\end{tabular}
}
\label{tab:main_results}
\end{table}
\tabref{tab:main_results} presents the main results, showing that \algname consistently outperforms baseline methods across both the MATH~\citep{hendrycks2021measuring} and GSM8K~\citep{cobbe2021training} benchmarks. 
Within all base models, \algname demonstrates clear improvements over both single answer and Multiagent Finetuning~\citep{subramaniam2025multiagent}.
For instance, on Qwen 3-1.7B, \algname achieves 93\% accuracy on MATH, representing an 17.2\% absolute gain over Multiagent Finetuning and a 113.3\% relative improvement over the single answer baseline.
On average, \algname achieves 67.23\% relative improvement over single answer and 19.06\% over the current state-of-the-art.
In terms of consensus, \algname also outperforms all baselines by a clear margin, with its improved consensus directly translating to higher accuracy, indicating superior inter-agent alignment enabled by efficient mind-to-mind communication. 
These gains are consistently observed across models ranging from 0.6B to 8B parameters, demonstrating the scalability and robustness of the proposed approach across a broad range of model sizes.

Additionally, unlike Multiagent Finetuning~\citep{subramaniam2025multiagent}, which requires finetuning the entire LLM and thus incurs substantial overhead, \algname only trains a lightweight autoencoder and adapter, whose computational cost depends only on the LLM's embedding dimension rather than the parameter count. 
This results in fundamentally smaller and model-agnostic training overhead, enabling efficient and scalable deployment even for very large LLMs.
For instance, both Llama-3-70B and 405B share a 16,384 embedding dimension; thus, \algname's overhead remains unchanged when moving from 70B to 405B, whereas Multiagent Finetuning~\citep{subramaniam2025multiagent} would require substantially more training cost at each scale.
Overall, these results validate both the efficiency and effectiveness of the proposed \algname, supporting the theoretical predictions of enhanced coordination and cognitive alignment in multi-agent LLMs.
\begin{figure}
    \centering
    \includegraphics[width=.99\linewidth]{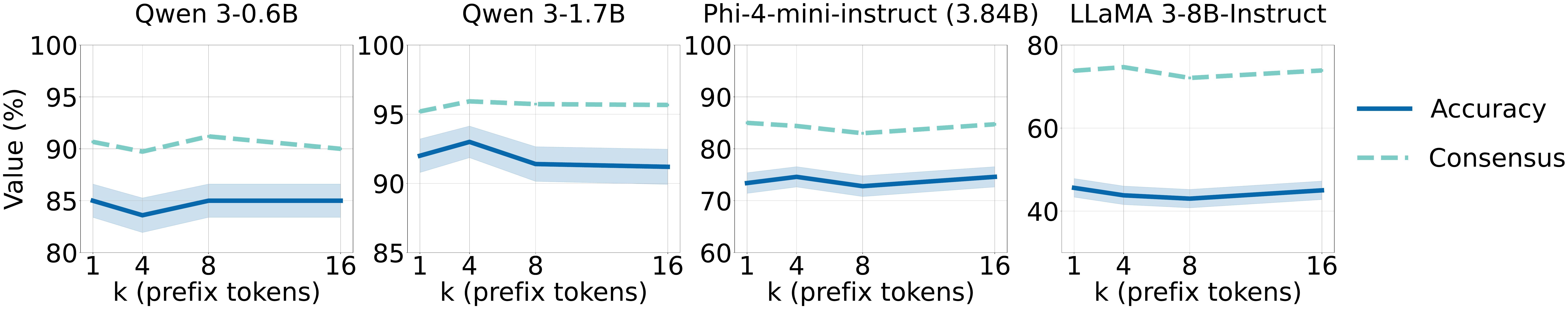}
    \vspace{-0.1in}
    \caption{
        Two-agent \algname with accuracy (solid) and consensus (dashed) performance on MATH~\citep{hendrycks2021measuring} as prefix length increases from 1 to 16.
    }
    \label{fig:varying_prefix}
\vspace{-0.2in}
\end{figure}

\subsection{Scaling the Number of Debate Rounds}\label{subsec:scaling_rounds}
\begin{wrapfigure}{r}{0.42\textwidth}
  \centering
  \vspace{-0.35in}
  \includegraphics[width=0.42\textwidth]{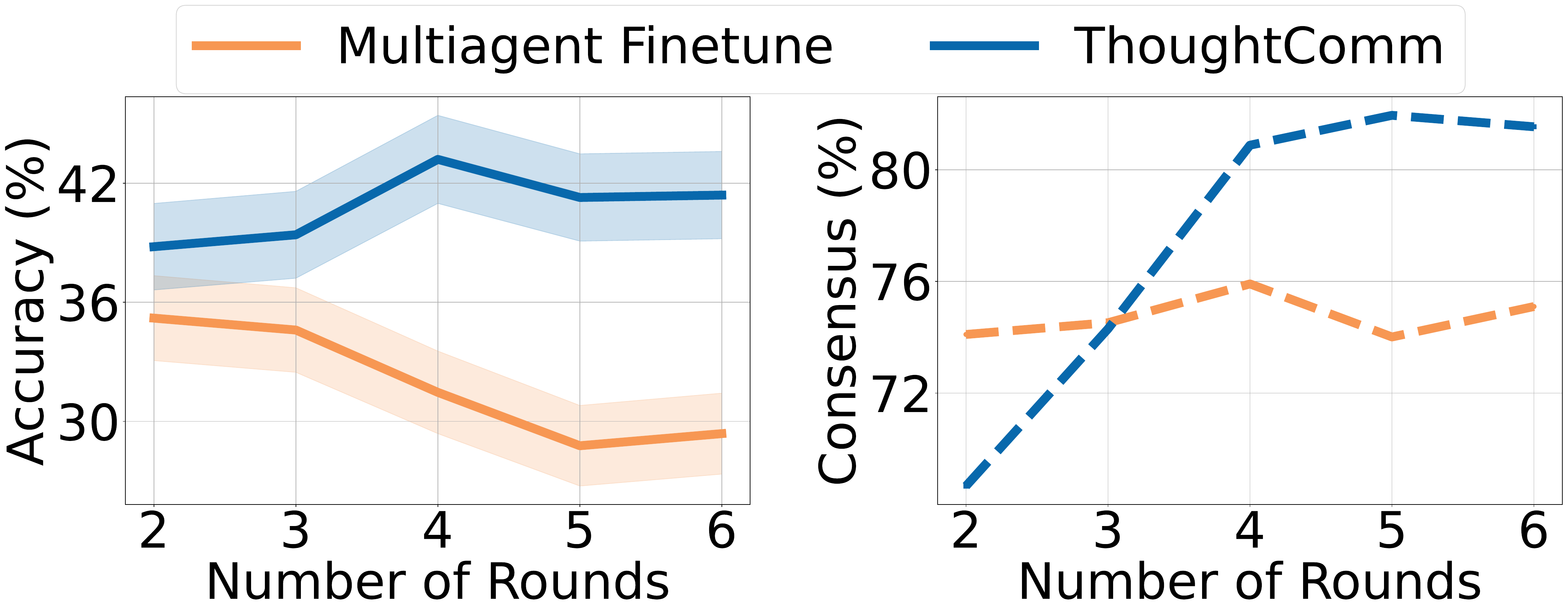}
  \vspace{-0.25in}
  \caption{
    Multi-agent performance as the number of debate rounds increases.
  }
  \label{fig:scaling_rounds}
\vspace{-0.1in}
\end{wrapfigure}
We further investigate how the number of debate rounds impacts multi-agent performance, as more rounds may introduce redundant or confusing information that can degrade results.
With two agents, we vary the number of rounds from 2 to 6 and evaluate on the MATH~\citep{hendrycks2021measuring} benchmark using LLaMA-3-8B-Instruct~\citep{grattafiori2024llama}, following the setup in \secref{subsec:exp_setup}.
As shown in \figref{fig:scaling_rounds}, Multiagent Finetuning suffers a drop in accuracy with more rounds, while consensus slightly increases and maintains. 
In contrast, \algname achieves simultaneous gains in both accuracy and consensus, demonstrating robustness to redundancy and noise by consistently identifying true latent \textit{thoughts}.

\subsection{Varying the Prefix Lengths}\label{subsec:prefix_lengths}
As discussed in \secref{subsec:prefix_adaptation}, the prefix length $m$ determines how many thought vectors are injected into agent context.
A key question is whether \algname remains robust as $m$ grows, or if excessive prefixes introduce redundant or irrelevant information that degrades performance.
To answer these questions, we sweep the prefix length $m \in \{1, 4, 8, 16\}$ across four models with different parameter sizes (Llama-3-8B-Instruct~\citep{grattafiori2024llama}, Phi-4-mini-instruct~\citep{abdin2024phi}, Qwen-3 0.6B, and Qwen-3 1.7B~\citep{yang2025qwen3}) on the MATH~\citep{hendrycks2021measuring} benchmark, using the same 500/500 train/test split from \secref{subsec:exp_setup}.
As shown in \figref{fig:varying_prefix}, both accuracy and consensus stay remarkably stable for all four models, with performance fluctuations under five percent even as $m$ increases sixteen-fold. 
These results demonstrate a clear robustness advantage of \algname by delivering reliable gains without requiring precise tuning of the prefix length, dramatically reducing hyperparameter overhead in practice.
Moreover, achieving near-optimal performance with a single injected vector highlights the efficiency of our thought-communication mechanism.
While both token and prefix embeddings have the same dimensionality (e.g., $1024$), a token embedding is tied to a single vocabulary item and typically encodes the semantics of just that one discrete token, often lying on a lower-dimensional subspace. In contrast, a prefix embedding is a free parameter optimized to encode many continuous latent thoughts, leveraging the full capacity of the embedding space.
\section{Related Works}\label{sec:related_works}
\paragraph{Multiagent LLMs communication.}
LLM-based multi-agent systems (MAS) have become a compelling strategy for advancing beyond the limitations of single LLMs~\citep{li2023camel, wu2023autogen, hong2023metagpt, guo2024large, tran2025multi}.
Specifically, multi-agent debate~\citep{du2023improving, pham2023let, liang2023encouraging}, which mimics human collaborative reasoning, has shown particular promise by amplifying reasoning through collective, diverse exchanges.
One of the most central factors that determines MAS effectiveness is the communication paradigm between agents~\citep{li2024survey, cemri2025multi}.
Extensive research has sought to improve this paradigm, exploring various directions such as improving communication efficiency~\citep{zhang2024cut, wang2025agentdropout, zeng2025s}, enabling more flexible topologies and workflows~\citep{khattab2023dspy, zhang2024g, liu2024dynamic, wu2024stateflow, wang2024agent, wang2025talk}, mitigating error propagation~\citep{wang2023adapting, yoffedebunc}, shifting from turn-based, full-response discussion to token-level collaboration~\citep{bian2025ptoco, chakraborty2025collab}, and moving beyond text tokens to token embeddings~\citep{pham2023let}.
However, all these approaches fundamentally rely on the exchange of natural language, either through text tokens or their embeddings, thus inheriting the constraints of human-style communication.
In contrast, \algname pioneers a new communication paradigm by extracting and uncovering the underlying \textit{latent thoughts} beneath surface-level language tokens and embeddings, enabling a more direct and expressive form of MAS communication and collaboration.

\paragraph{Identifiability of latent variable models.}
Classical identifiability results in latent variable models largely focus on linear settings, offering strong guarantees through factor analysis, structural equations, and ICA \citep{reiersol1950identifiability, lawley1962factor, aigner1984latent, comon1994independent, bekker1997generic, bishop1998latent}. To relax linearity, previous work introduces auxiliary variables \citep{hyvarinen2016unsupervised, hyvarinen2019nonlinear, yao2021learning, halva2021disentangling, lachapelle2021disentanglement, song2024causal, li2025identification}, structural constraints on the mixing function \citep{taleb1999source, moran2021identifiable, kivva2022identifiability, zhengidentifiability,  buchholz2022function, zheng2025nonparametric}, or the synergy of both \citep{zheng2023generalizing, li2025synergy}. Causal representation learning often depends on interventions \citep{von2023nonparametric, jiang2023learning, jin2023learning, zhang2024causal} or counterfactual views \citep{von2021self, brehmer2022weakly}. These approaches typically require parametric assumptions or external signals. With a weaker goal of identifying only shared and private thoughts and their structures across agents, our framework can be applied in the general nonparametric setting without such aids.

\section{Conclusion}
To enable LLM agents to communicate through thoughts, we formulate multi-agent communication as a latent variable model to explore agents' minds. We establish identifiability results under general conditions to ensure reliable recovery of latent thoughts and structures, and propose a new framework, \algname, for effective collaboration via thought. While this introduces a new direction, certain \textit{limitations} remain. Our experiments focus on using model states as observed variables, which may not be feasible for closed-source models. A promising alternative is to replace them with context-aware embeddings of the observational data and recover latent thoughts from those. The observational data need not be textual and can span any modality, extending the framework beyond LLMs. Although we have not explored this empirically, as generating embeddings suitable for summarization is a separate topic, the theory and framework can accommodate this extension directly. We hope this work sheds light on the hidden world beneath observation, as many challenges remain unsolvable through surface-level observation, regardless of scale in data or compute.
\section*{Acknowledgment}

The authors would like to thank the anonymous reviewers and AC for helpful comments and suggestions
during the reviewing process. The authors would also like to acknowledge the support from NSF Award No. 2229881, AI Institute for Societal Decision Making (AI-SDM), the National Institutes of Health (NIH) under Contract R01HL159805, and grants from Quris AI, Florin Court Capital, MBZUAI-WIS Joint Program, and the Al Deira Causal Education project.

\bibliography{example_paper}
\bibliographystyle{plainnat}



\newpage
\appendix

\addcontentsline{toc}{section}{Appendices of Thought Communication in Multiagent Collaboration} 
\part{Appendix: \\ 
\textit{Thought Communication in Multiagent Collaboration}} 

{\hypersetup{linkcolor=black}
\parttoc 
}
\section{Proofs}\label{appx:proofs}

\subsection{Proof of Theorem \ref{thm:shared}}\label{appx:proof_thm_shared}

\shared*

\begin{proof}
Because $H_t \overset{d}{=} \hat{f}(\hat{Z}_t)$ and $H_t \overset{d}{=} f(Z_t)$, we have
\begin{equation}
    p(\hat{f}(\hat{Z}_t)) = p(f(Z_t)).
\end{equation}
According to the change-of-variable formula, there exists $h = \hat{f}^{-1}\circ f \colon Z_t \rightarrow \hat{Z}_t$ s.t. $\hat{Z}_t = h(Z_t)$. Taking the derivatives of both sides w.r.t. $Z_t$ yields
\begin{equation}
    J_f(Z_t) = J_{\hat{f}}(\hat{Z}_t) J_h(Z_t),
\end{equation}
which is equivalent to
\begin{equation}
    J_{\hat{f}}(\hat{Z}_t) = J_f(Z_t) J^{-1}_h(Z_t).
\end{equation}
The inverse Jacobian $J^{-1}_h(Z_t)$ exists since both $f$ and $\hat{f}$ are invertible, implying that $h = \hat{f}^{-1} \circ f$ is itself invertible.

Since for each $i \in [n_z]$, there exist points where the Jacobian $J_f(Z_t)_{i,\cdot}$ spans its support subspace $\left(\mathcal{S}_{J_f}\right)_{i,\cdot}$, we can express any vector in that subspace with a linear combination of these vectors. Therefore, for any $j \in [n_z]$ where $B(J_f)_{i,j} \neq 0$, we have
\begin{equation}
    M_{j,\cdot} = e_j M,
\end{equation}
where $M$ is a constant matrix with the same nonzero pattern as $J^{-1}_h(Z_t)$, and we construct a one-hot vector $e_j \in \mathcal{S}_{{J_f}_{i,\cdot}}$ with $\alpha_k$ as coefficients of that linear combination:
\begin{equation}
    e_j \coloneqq \sum_{k \in B_i} \alpha_k (J_f(z^{(k)}))_{i,\cdot},
\end{equation}
where $B_i$ denotes the set of points spanning the subspace. Thus we have
\begin{equation}
    M_{j,\cdot} = \sum_{k \in B_i} \alpha_k (J_f(z^{(k)}))_{i,\cdot} M.
\end{equation}
According to the assumption, we have
\begin{equation}
    (J_f(z^{(k)}))_{i,\cdot} M = (J_f(Z_t) M)_{i,\cdot} \in \mathcal{S}_{{J_{\hat{f}}}_{i,\cdot}}.
\end{equation}
Therefore, for any $j \in [n_z]$ where $B(J_f)_{i,j} \neq 0$\, there is
\begin{equation} \label{eq:connect}
    M_{j,\cdot} \in \mathcal{S}_{{J_{\hat{f}}}_{i,\cdot}}.
\end{equation}

Construct a bipartite graph
\[
G = (R,C,E), \quad R = C = [n_z], \quad (j,k)\in E \;\Longleftrightarrow\; M_{j,k}\neq 0 .
\]
Since $M$ is invertible, its rows are linearly independent, giving
\begin{equation}
|N(S)| \ge |S| \quad\forall S\subseteq R,
\end{equation}
where $N(S)$ is the neighborhood of $S$. Hall’s marriage theorem then yields a permutation
$\pi\in S_{n_z}$ with
\begin{equation}
{J^{-1}_h(Z_t)}_{j,\pi(j)} \neq 0,\quad\forall j\in[n_z]. \label{eq:match}
\end{equation}
According to Eq. \eqref{eq:connect}, this further implies that, for any $j \in [n_z]$ where $B(J_f)_{i,j} \neq 0$, there is 
\begin{equation}
    (i, \pi(j)) \in {i} \times M_{j,\cdot} \subset S_{J_{\hat{f}}}
\end{equation}
Hence
\begin{equation}\label{eq:imp}
(J_f(Z_t))_{i,j}\neq 0 \;\Longrightarrow\; (J_{\hat{f}}(\hat{Z}_t))_{i,\pi(j)}\neq 0 .
\end{equation}

Given additionally the $\ell_0$ regularization on $J_{\hat{f}}$:
\begin{equation}
\| (J_{\hat{f}})_{i,\cdot}\|_0 \le \|(J_f)_{i,\cdot}\|_0,\quad\forall i\in[n_z].
\end{equation}
Together with \eqref{eq:imp}, this gives the equivalence
\begin{equation} \label{eq:jacobian_connect}
(J_f(Z_t))_{i,j} \neq 0 \;\Longleftrightarrow\;
(J_{\hat{f}}(\hat{Z}_t))_{i,\pi(j)} \neq 0 , \quad\forall i,j\in[n_z],
\end{equation}

For any $Z_i \in Z_{H^{(i)}_t} \cap Z_{H^{(j)}_t}$ and any $Z_j \in (Z_{H^{(i)}_t} \cup Z_{H^{(j)}_t}) \setminus (Z_{H^{(i)}_t} \cap Z_{H^{(j)}_t})$, there is
\begin{equation}
    B(J_f)_{H^{(i)}_t,i} \neq 0.
\end{equation}
Based on Eq. \eqref{eq:connect}, there is
\begin{equation} \label{eq:ms_i}
    M_{i,\cdot} \in \mathcal{S}_{{J_{\hat{f}}}_{H^{(i)}_t,\cdot}},
\end{equation}
where we use $\mathcal{S}_{{J_{\hat{f}}}_{H^{(i)}_t,\cdot}}$ to index multiple rows corresponding to $H^{(i)}_t$ at once. This is for notational brevity and will also be applied later. We also have
\begin{equation}
    B(J_f)_{H^{(j)}_t,i} \neq 0,
\end{equation}
where we slightly abuse the notation to indicate that not all entries at the specified indices are zero. This convention is adopted throughout, though we only make it explicit here.

Similarly, there is also
\begin{equation} \label{eq:ms_j}
    M_{i,\cdot} \in \mathcal{S}_{{J_{\hat{f}}}_{H^{(j)}_t,\cdot}}.
\end{equation}
Suppose for contradiction that, for any $Z_j \in (Z_{H^{(i)}_t} \cup Z_{H^{(j)}_t}) \setminus (Z_{H^{(i)}_t} \cap Z_{H^{(j)}_t})$, there is
\begin{equation}
    M_{i,\pi(j)} \neq 0.
\end{equation}
Then, according to Eq. \eqref{eq:ms_i}, there is
\begin{equation}
    B(J_{\hat{f}})_{H^{(i)}_t,\pi(j)} \neq 0.
\end{equation}
This implies the follows according to Eq. \eqref{eq:jacobian_connect}:
\begin{equation}
    B(J_{f})_{H^{(i)}_t,j} \neq 0.
\end{equation}
Similarly, according to Eq. \eqref{eq:ms_j}, there is
\begin{equation}
    B(J_{\hat{f}})_{H^{(j)}_t,\pi(j)} \neq 0.
\end{equation}
This implies the follows according to Eq. \eqref{eq:jacobian_connect}:
\begin{equation}
    B(J_{f})_{H^{(j)}_t,j} \neq 0.
\end{equation}
Thus, there must be
\begin{equation}
    Z_j \in Z_{H^{(i)}_t} \cap Z_{H^{(j)}_t},
\end{equation}
which contradicts
\begin{equation}
    Z_j \in (Z_{H^{(i)}_t} \cup Z_{H^{(j)}_t}) \setminus (Z_{H^{(i)}_t} \cap Z_{H^{(j)}_t}).
\end{equation}
Therefore, there must be
\begin{equation}
    M_{i,\pi(j)} = 0,
\end{equation}
which implies $\frac{\partial Z_i}{\partial \hat{Z}_{\pi(j)}} = 0$.
\end{proof}

\subsection{Proof of Theorem \ref{thm:private}}\label{appx:proof_thm_private}

\private*

\begin{proof}

Part of the derivations has been provided in proofs of Theorem \ref{thm:shared}, and we include it for completeness. Because $H_t \overset{d}{=} \hat{f}(\hat{Z}_t)$ and $H_t \overset{d}{=} f(Z_t)$, we have
\begin{equation}
    p(\hat{f}(\hat{Z}_t)) = p(f(Z_t)).
\end{equation}
According to the change-of-variable formula, there exists $h = \hat{f}^{-1}\circ f \colon Z_t \rightarrow \hat{Z}_t$ s.t. $\hat{Z}_t = h(Z_t)$. Taking the derivatives of both sides w.r.t. $Z_t$ yields
\begin{equation}
    J_f(Z_t) = J_{\hat{f}}(\hat{Z}_t) J_h(Z_t),
\end{equation}
which is equivalent to
\begin{equation}
    J_{\hat{f}}(\hat{Z}_t) = J_f(Z_t) J^{-1}_h(Z_t).
\end{equation}
The inverse Jacobian $J^{-1}_h(Z_t)$ exists since both $f$ and $\hat{f}$ are invertible, implying that $h = \hat{f}^{-1} \circ f$ is itself invertible.

Since for each $i \in [n_z]$, there exist points where the Jacobian $J_f(Z_t)_{i,\cdot}$ spans its support subspace $\left(\mathcal{S}_{J_f}\right)_{i,\cdot}$, we can express any vector in that subspace with a linear combination of these vectors. Therefore, for any $j \in B(J_f)_{i,\cdot}$, we have
\begin{equation}
    M_{j,\cdot} = e_j M,
\end{equation}
where $M$ is a constant matrix with the same nonzero pattern as $J^{-1}_h(Z_t)$, and we construct a one-hot vector $e_j \in \mathcal{S}_{{J_f}_{i,\cdot}}$ with $\alpha_k$ as coefficients of that linear combination:
\begin{equation}
    e_j \coloneqq \sum_{k \in B_i} \alpha_k (J_f(z^{(k)}))_{i,\cdot},
\end{equation}
where $B_i$ denotes the set of points spanning the subspace. Thus we have
\begin{equation}
    M_{j,\cdot} = \sum_{k \in B_i} \alpha_k (J_f(z^{(k)}))_{i,\cdot} M.
\end{equation}
According to the assumption, we have
\begin{equation}
    (J_f(z^{(k)}))_{i,\cdot} M = (J_f(Z_t) M)_{i,\cdot} \in \mathcal{S}_{{J_{\hat{f}}}_{i,\cdot}}.
\end{equation}
Therefore, for any $j \in B(J_f)_{i,\cdot}$, there is
\begin{equation} \label{eq:connect_p}
    M_{j,\cdot} \in \mathcal{S}_{{J_{\hat{f}}}_{i,\cdot}}.
\end{equation}

Construct a bipartite graph
\[
G = (R,C,E), \quad R = C = [n_z], \quad (j,k)\in E \;\Longleftrightarrow\; M_{j,k}\neq 0 .
\]
Since $M$ is invertible, its rows are linearly independent, giving
\begin{equation}
|N(S)| \ge |S| \quad\forall S\subseteq R,
\end{equation}
where $N(S)$ is the neighborhood of $S$.  Hall’s marriage theorem then yields a permutation
$\pi\in S_{n_z}$ with
\begin{equation}
{J^{-1}_h(Z_t)}_{j,\pi(j)} \neq 0,\quad\forall j\in[n_z]. \label{eq:match_p}
\end{equation}

According to Eq. \eqref{eq:connect_p}, this further implies that, for any $j \in [n_z]$ where $B(J_f)_{i,j} \neq 0$, there is 
\begin{equation}
    (i, \pi(j)) \in {i} \times M_{j,\cdot} \subset S_{J_{\hat{f}}}
\end{equation}
Hence
\begin{equation}\label{eq:imp_p}
(J_f)_{i,j}\neq 0 \;\Longrightarrow\; (J_{\hat{f}})_{i,\pi(j)}\neq 0.
\end{equation}

Given additionally the $\ell_0$ regularization on $J_{\hat{f}}$:
\begin{equation}
\| (J_{\hat{f}})_{i,\cdot}\|_0 \le \|(J_f)_{i,\cdot}\|_0,\quad\forall i\in[n_z].
\end{equation}
Together with Eq. \eqref{eq:imp_p}, this gives the equivalence
\begin{equation}\label{eq:jacobian_connect_p}
(J_f(Z_t))_{i,j} \neq 0 \;\Longleftrightarrow\;
(J_{\hat{f}}(\hat{Z}_t))_{i,\pi(j)} \neq 0 , \quad\forall i,j\in[n_z],
\end{equation}

Consider the case where $Z_{i'} \in Z_{H^{(i)}_t} \cap Z_{H^{(j)}_t}$ and $Z_{j'} \in Z_{t} \setminus (Z_{H^{(i)}_t} \cap Z_{H^{(j)}_t})$. Based on Eq. \eqref{eq:connect_p}, there is
\begin{equation}
    M_{i',\cdot} \in \mathcal{S}_{{J_{\hat{f}}}_{H^{(i)}_t,\cdot}}.
\end{equation}
Suppose
\begin{equation}
    M_{i',\pi(j')} \neq 0.
\end{equation}
Then we have
\begin{equation}
    B(J_{\hat{f}})_{H^{(i)}_t,\pi(j')} \neq 0,
\end{equation}
which implies
\begin{equation} \label{eq:hi}
    B(J_{f})_{H^{(i)}_t,j'} \neq 0.
\end{equation}
At the same time, since $Z_{i'} \in Z_{H^{(j)}_t}$, there is
\begin{equation}
    M_{i',\cdot} \in \mathcal{S}_{{J_{\hat{f}}}_{H^{(j)}_t,\cdot}}.
\end{equation}
Since we suppose $M_{i',\pi(j')} \neq 0$, it follows that
\begin{equation}
    B(J_{\hat{f}})_{H^{(j)}_t,\pi(j')} \neq 0,
\end{equation}
which implies
\begin{equation} \label{eq:hj}
    B(J_{f})_{H^{(j)}_t,j'} \neq 0.
\end{equation}

Clearly, Eqs. \eqref{eq:hi} and \eqref{eq:hj} together contradict $Z_j' \in Z_{t} \setminus (Z_{H^{(i)}_t} \cap Z_{H^{(j)}_t})$. Thus, there must be 
\begin{equation} \label{eq:d_i}
    M_{i',\pi(j')} = 0.
\end{equation}

For any $Z_i \in Z_{H^{(i)}_t} \setminus Z_{H^{(j)}_t}$ and any $Z_j \in Z_{H^{(j)}_t}$, we first consider $Z_j \in Z_{H^{(j)}_t} \cap Z_{H^{(i)}_t}$. Since $Z_{H^{(j)}_t} \cap Z_{H^{(i)}_t}$ does not intersect with $Z_{H^{(i)}_t} \setminus Z_{H^{(j)}_t}$, $Z_j$ is not a function of $Z_i$. According to the invertibility and Eq. \eqref{eq:d_i}, $Z_{H^{(j)}_t} \cap Z_{H^{(i)}_t}$ can only be an invertible function of $\sigma(\hat{Z}_{H^{(j)}_t} \cap \hat{Z}_{H^{(i)}_t})$, where $\sigma$ denotes the permutation function corresponding to the permutation $\pi$. Further given that $Z_j$ is not a function of $Z_i$ and $Z_i \in Z_{H^{(i)}_t} \setminus Z_{H^{(j)}_t}$, $Z_j$ is also not a function of any variable in $\sigma(\hat{Z}_{H^{(j)}_t} \cap \hat{Z}_{H^{(i)}_t})$, i.e.,
\begin{equation}
    M_{i,\pi(j)} = 0.
\end{equation}

We then consider the other case where $Z_j \in Z_{H^{(j)}_t} \setminus Z_{H^{(i)}_t}$. There is
\begin{equation}
    B(J_f)_{H^{(i)}_t,i} \neq 0.
\end{equation}
It implies that
\begin{equation}
    M_{i,\cdot} \in \mathcal{S}_{{J_{\hat{f}}}_{H^{(i)}_t,\cdot}}.
\end{equation}
For $Z_j \in Z_{H^{(j)}_t} \setminus Z_{H^{(i)}_t}$, suppose
\begin{equation}
    M_{i,\pi(j)} \neq 0.
\end{equation}
Then there is
\begin{equation}
    B(J_{\hat{f}})_{H^{(i)}_t,\pi(j)} \neq 0.
\end{equation}
Which is equivalent to
\begin{equation}
    B(J_{f})_{H^{(i)}_t,j} \neq 0.
\end{equation}
This is a contradiction since $Z_j \in Z_{H^{(j)}_t} \setminus Z_{H^{(i)}_t}$.

Therefore, we have
\begin{equation}
    M_{i,\pi(j)} = 0.
\end{equation}
Considering both cases, we prove that $\frac{\partial Z_i}{\partial \hat{Z}_{\pi(j)}} = 0$ for any $Z_i \in Z_{H^{(i)}_t} \setminus Z_{H^{(j)}_t}$ and any $Z_j \in Z_{H^{(j)}_t}$.
\end{proof}

\subsection{Proof of Theorem \ref{thm:structure}}\label{appx:proof_thm_structure}

\structure*

\begin{proof}
Part of the derivations has been provided in proofs of Theorems \ref{thm:shared} and \ref{thm:private}, and we include it for completeness. Because $H_t \overset{d}{=} \hat{f}(\hat{Z}_t)$ and $H_t \overset{d}{=} f(Z_t)$, we have
\begin{equation}
    p(\hat{f}(\hat{Z}_t)) = p(f(Z_t)).
\end{equation}
According to the change-of-variable formula, there exists $h = \hat{f}^{-1}\circ f \colon Z_t \rightarrow \hat{Z}_t$ s.t. $\hat{Z}_t = h(Z_t)$. Taking the derivatives of both sides w.r.t. $Z_t$ yields
\begin{equation}
    J_f(Z_t) = J_{\hat{f}}(\hat{Z}_t) J_h(Z_t),
\end{equation}
which is equivalent to
\begin{equation}
    J_{\hat{f}}(\hat{Z}_t) = J_f(Z_t) J^{-1}_h(Z_t).
\end{equation}
The inverse Jacobian $J^{-1}_h(Z_t)$ exists since both $f$ and $\hat{f}$ are invertible, implying that $h = \hat{f}^{-1} \circ f$ is itself invertible.

Since for each $i \in [n_z]$, there exist points where the Jacobian $J_f(Z_t)_{i,\cdot}$ spans its support subspace $\left(\mathcal{S}_{J_f}\right)_{i,\cdot}$, we can express any vector in that subspace with a linear combination of these vectors. Therefore, for any $j \in B(J_f)_{i,\cdot}$, we have
\begin{equation}
    M_{j,\cdot} = e_j M,
\end{equation}
where $M$ is a constant matrix with the same nonzero pattern as $J^{-1}_h(Z_t)$, and we construct a one-hot vector $e_j \in \mathcal{S}_{{J_f}_{i,\cdot}}$ with $\alpha_k$ as coefficients of that linear combination:
\begin{equation}
    e_j \coloneqq \sum_{k \in B_i} \alpha_k (J_f(z^{(k)}))_{i,\cdot}.
\end{equation}
Thus we have
\begin{equation}
    M_{j,\cdot} = \sum_{k \in B_i} \alpha_k (J_f(z^{(k)}))_{i,\cdot} M.
\end{equation}
According to the assumption, we have
\begin{equation}
    (J_f(z^{(k)}))_{i,\cdot} M = (J_f(Z_t) M)_{i,\cdot} \in \mathcal{S}_{{J_{\hat{f}}}_{i,\cdot}}.
\end{equation}
Therefore, for any $j \in B(J_f)_{i,\cdot}$, there is
\begin{equation} \label{eq:connect_st}
    M_{j,\cdot} \in \mathcal{S}_{{J_{\hat{f}}}_{i,\cdot}}.
\end{equation}

Construct a bipartite graph
\[
G = (R,C,E), \quad R = C = [n_z], \quad (j,k)\in E \;\Longleftrightarrow\; M_{j,k}\neq 0 .
\]
Since $M$ is invertible, its rows are linearly independent, giving
\[
|N(S)| \ge |S| \quad\forall S\subseteq R,
\]
where $N(S)$ is the neighborhood of $S$.  Hall’s marriage theorem then yields a permutation
$\pi\in S_{n_z}$ with
\begin{equation}
{J^{-1}_h(Z_t)}_{j,\pi(j)} \neq 0,\quad\forall j\in[n_z]. \label{eq:match_st}
\end{equation}

According to Eq. \eqref{eq:connect_st}, this further implies that, for any $j \in [n_z]$ where $B(J_f)_{i,j} \neq 0$, there is 
\begin{equation}
    (i, \pi(j)) \in {i} \times M_{j,\cdot} \subset S_{J_{\hat{f}}}
\end{equation}
Hence
\begin{equation}\label{eq:imp_st}
(J_f)_{i,j}\neq 0 \;\Longrightarrow\; (J_{\hat{f}})_{i,\pi(j)}\neq 0 .
\end{equation}

Given additionally the $\ell_0$ regularization on $J_{\hat{f}}$:
\begin{equation}
\| (J_{\hat{f}})_{i,\cdot}\|_0 \le \|(J_f)_{i,\cdot}\|_0,\quad\forall i\in[n_z].
\end{equation}
Together with \eqref{eq:imp_st}, this gives the equivalence
\begin{equation}\label{eq:jacobian_connect_st}
(J_f(Z_t))_{i,j} \neq 0 \;\Longleftrightarrow\;
(J_{\hat{f}}(\hat{Z}_t))_{i,\pi(j)} \neq 0 , \quad\forall i,j\in[n_z].
\end{equation}
This implies the equation that
\begin{equation}
    B(J_{\hat{f}}) = B(J_f)P,
\end{equation}
where $P$ is a permutation matrix.
\end{proof}
\section{Supplementary Discussion} \label{appx:discuss}

\paragraph{Alternative to model states.}
Our main framework assumes access to the model states $H_t$ of each agent before communication. These states provide a rich representation of the agent’s processing of context and are used as inputs to our autoencoder for recovering latent thoughts. However, such internal states may be inaccessible in many practical settings, particularly when using closed-source or API-restricted models.

In these cases, a viable alternative is to replace the model state $H_t^{(i)}$ of each agent with a compact embedding extracted from its textual response. Specifically, one can apply a context-aware embedding model to summarize the agent’s generated text into a fixed-size vector, which is then treated as a proxy for the unavailable model state.

Crucially, this embedding does not need to preserve any structure among agents, nor does it need to reflect the agent’s intent. Its only requirement is to provide a compressed summary of the textual content at the linguistic level. Examples of such embedding methods include those from models like BERT or RoBERTa, pooled sentence embeddings from Sentence-BERT~\citep{reimers2019sentence}, or output vectors from instruction-tuned embedding APIs. These methods are designed to produce compact, semantically meaningful vectors that summarize the surface content of a given text.

Once such an embedding is obtained for each agent, the rest of the framework remains unchanged. The collection of response embeddings is treated as a surrogate for $H_t$ and passed through the sparsity-regularized autoencoder to recover latent thoughts $\hat{Z}_t$. From that point on, latent communication proceeds identically: inferring shared/private thoughts, routing them based on recovered structure, and injecting them into agents via prefix adaptation.

This replacement provides a drop-in mechanism to support latent communication in scenarios where model internals are inaccessible, enabling broader applicability of the framework across both open- and closed-source agents. Naturally, one may choose suitable encoders for other modalities to extend the framework beyond LLMs.


\section{Experimental Details and Additional Results}\label{appx:exp}
\subsection{Implementation Details}\label{appx:implementation_details}
For experiments conducted in \secref{sec:real_exp}, we set the prefix token count for our method to 1.
For baseline comparisons, we utilize the original code released by the authors\footnote{\url{https://github.com/vsubramaniam851/multiagent-ft/tree/main}}.
All experiments are conducted on a single compute node with 8 NVIDIA H100 GPUs.

\subsection{Additional Results on Scaling Debate Rounds}\label{appx:additional_results}
In \secref{subsec:scaling_rounds}, we compare the performance of Multiagent Finetune~\citep{subramaniam2025multiagent} and \algname as the number of debate rounds increases from 2 to 6 based on Llama-3-8B-Instruct~\citep{grattafiori2024llama}.
Here, we further extend the analysis to an additional model, Qwen-3-1.7B~\citep{yang2025qwen3}, demonstrating that \algname remains robust and is not adversely affected by increased redundancy caused by increased numbers of debate rounds.
As shown in \figref{fig:app_scaling_rounds}, we observe that the accuracy and consensus of \algname remain stable or even improve as the number of debate rounds increases up to 6. 
In contrast, the performance of Multiagent Finetune~\citep{subramaniam2025multiagent} declines noticeably as rounds increase beyond 4, particularly in the accuracy metric. 
This further supports our claim that \algname is robust to the accumulation of redundant or noisy information introduced by additional communication rounds.

It is important to note, however, that high consensus among agents does not always imply high task accuracy. 
This phenomenon is particularly evident in the Qwen-3-1.7B~\citep{yang2025qwen3} results for Multiagent Finetune~\citep{subramaniam2025multiagent}, where consensus steadily increases as the number of debate rounds grows—from 2 to 6, while the corresponding accuracy remains stagnant or even degrades. 
This decoupling suggests that agents can converge on a common answer even when that answer is incorrect, leading to a failure mode in which additional communication drives premature agreement rather than genuine reasoning improvements. 

In contrast, \algname not only increases consensus but also aligns higher agreement with improved accuracy. 
We also highlight that the gap between \algname and the baseline widens at higher round counts.
These results underscore the importance of structure-aware latent communication in preventing unproductive conformity and fostering truly collaborative reasoning in multi-agent LLM systems.
Taken together, these findings confirm the scalability of our approach: \algname enables multi-agent systems to leverage more communication rounds for improved reasoning without incurring the degradation commonly observed in prior debate-style frameworks.
\begin{figure*}[h!]
  \centering
  \includegraphics[width=.8\linewidth]{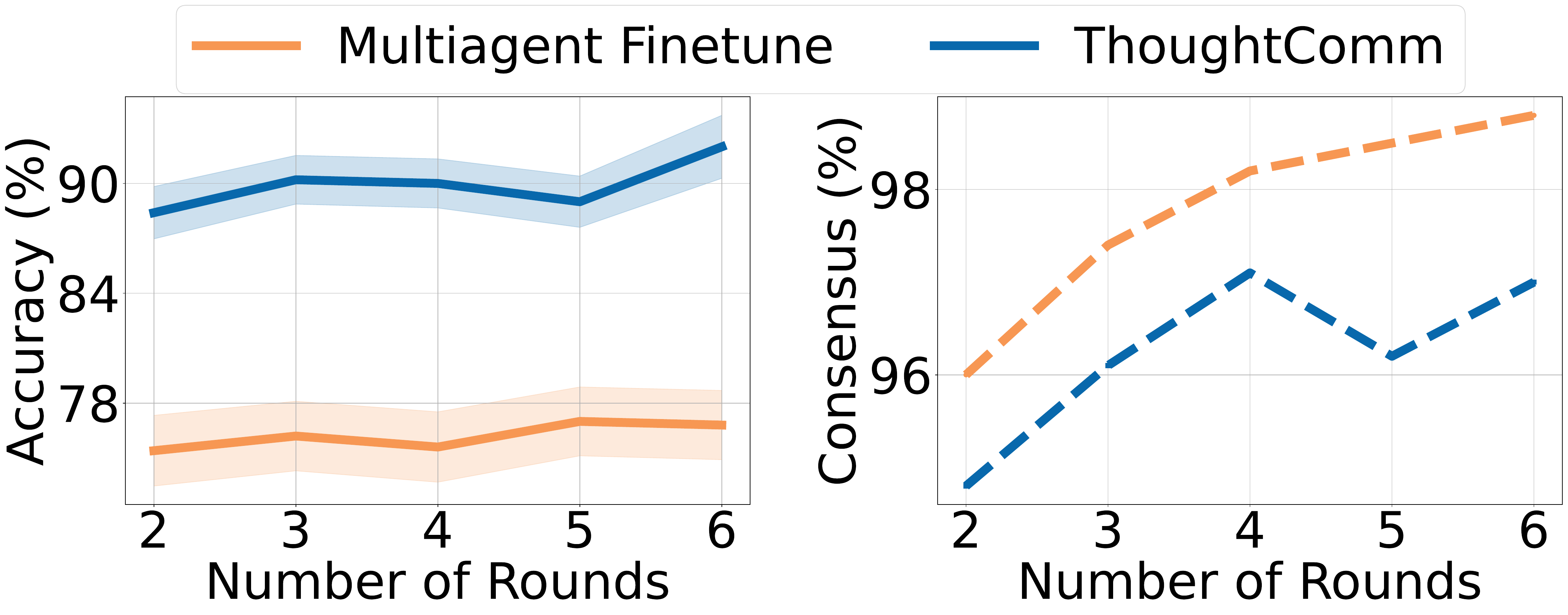}
  \caption{
    Additional results of multi-agent performance on Qwen-3-1.7B~\citep{yang2025qwen3} as the number of debate rounds increases.
  }
  \label{fig:app_scaling_rounds}
\end{figure*}

\subsection{Additional Results on Varying Latent Dimensions}\label{appx:latent_dims}
We investigate how varying the latent dimensionality affects performance on the MATH dataset.  
In these experiments, the setup involves two agents, two rounds, and a single prefix token used for communication.  
Results are shown for both Llama-3-8B-Instruct and Qwen-3-1.7B models.

As shown in \figref{fig:app_latent_llama} and \figref{fig:app_latent_qwen}, accuracy consistently improves as the latent dimension increases up to $512$, after which the gains saturate.  
This suggests that while higher-capacity latent spaces facilitate richer communication between agents, overly large latent dimensions yield diminishing returns, likely due to redundancy in the learned representations.

\begin{figure*}[h!]
  \centering
  \includegraphics[width=.8\linewidth]{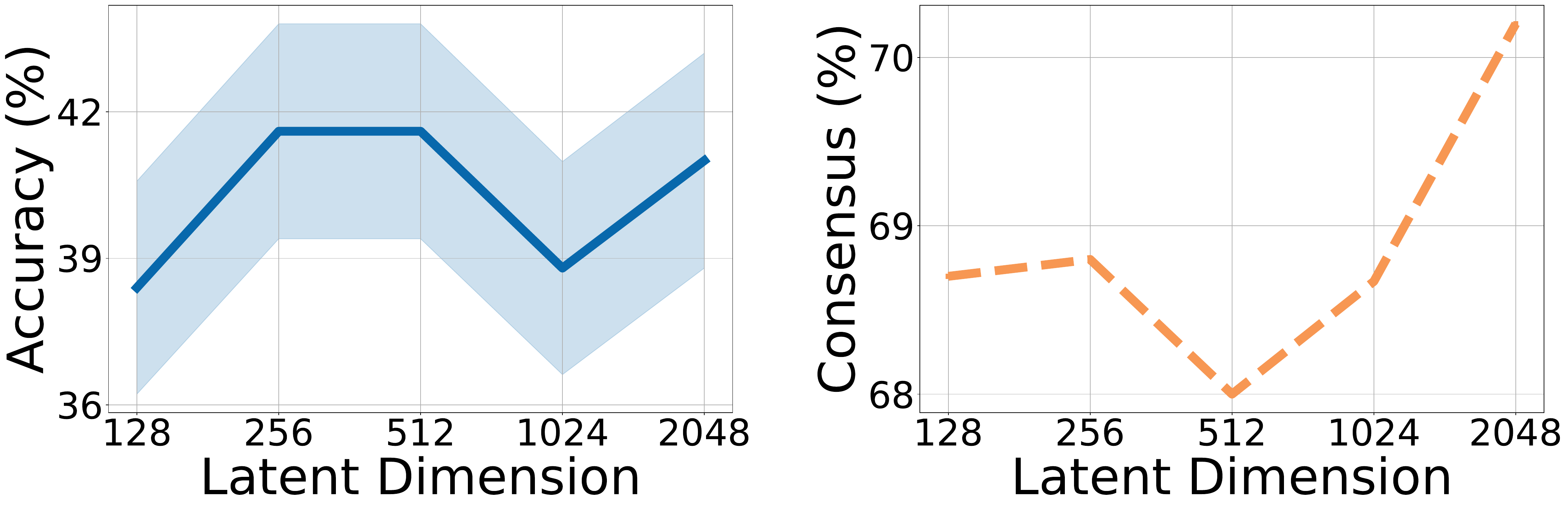}
  \caption{
    Effect of varying latent dimensionality on MATH for Llama-3-8B-Instruct~\citep{grattafiori2024llama}.  
    Accuracy improves with increased latent capacity, stabilizing beyond 1024 dimensions.
  }
  \label{fig:app_latent_llama}
\end{figure*}

\begin{figure*}[h!]
  \centering
  \includegraphics[width=.8\linewidth]{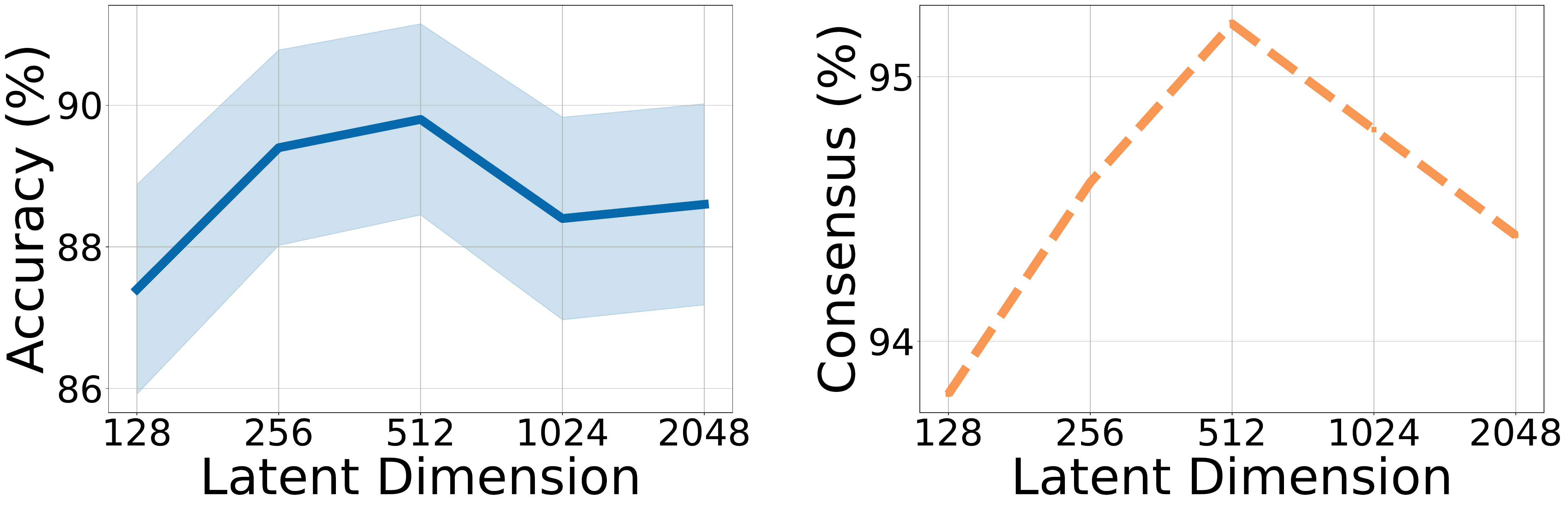}
  \caption{
    Effect of varying latent dimensionality on MATH for Qwen-3-1.7B~\citep{yang2025qwen3}.  
    A similar trend is observed, confirming that the benefits of higher latent capacity generalize across architectures.
  }
  \label{fig:app_latent_qwen}
\end{figure*}

\begin{figure*}[h!]
  \centering
  \includegraphics[width=.8\linewidth]{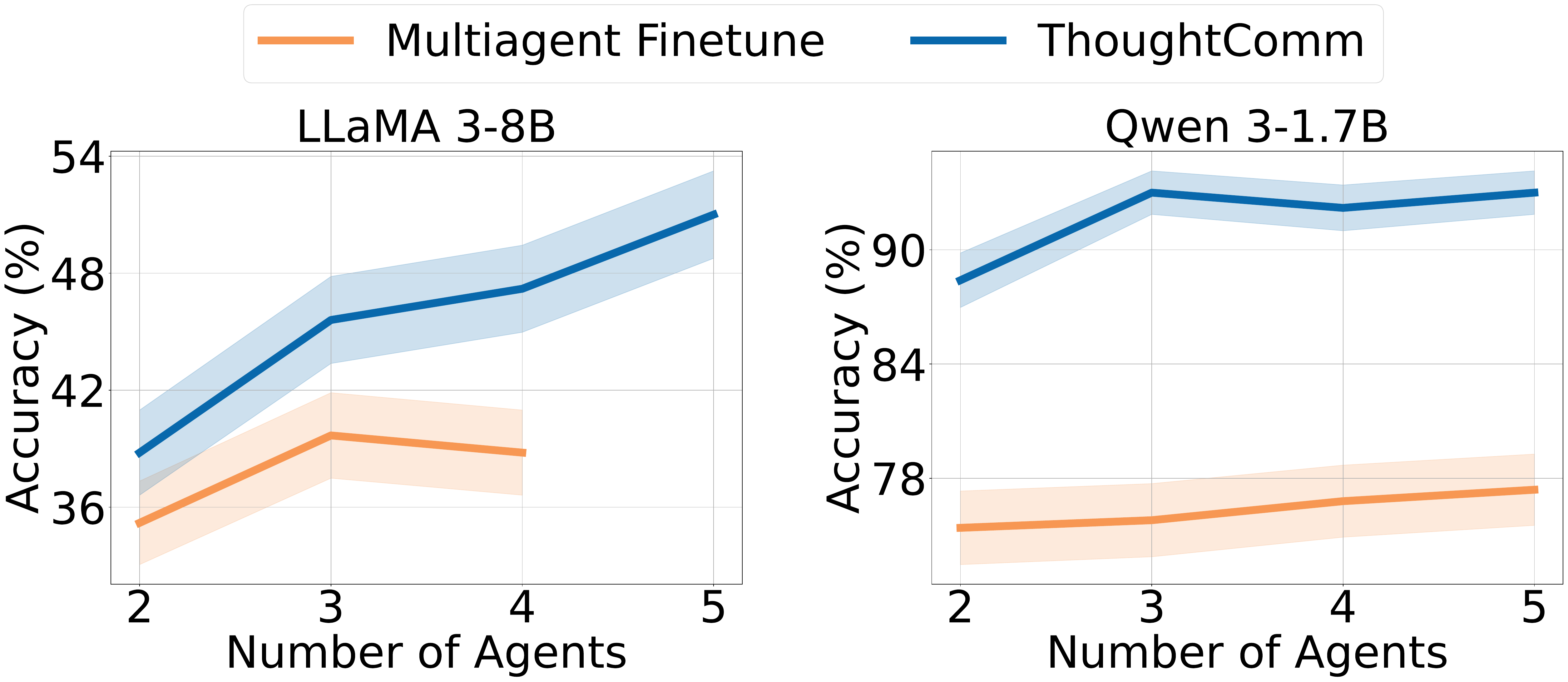}
  \caption{
    Performance as the number of agents increases on MATH for Llama-3-8B-Instruct and Qwen-3-1.7B. The missing data point is due to runtime limit exceeded.
  }
  \label{fig:app_agents}
\end{figure*}

\subsection{Additional Results on Varying Number of Agents}\label{appx:num_agents}
We next analyze how increasing the number of collaborating agents influences performance.  
All experiments are conducted with two rounds, latent dimension of $1024$, and a single prefix token on the MATH dataset.

As shown in \figref{fig:app_agents}, both models initially benefit from more agents, achieving notable gains when increasing from 2 to 3.  
However, beyond 3 agents, accuracy plateaus or slightly declines, particularly for the Multiagent Finetune baseline.  
In contrast, \algname maintains stable accuracy even as the number of agents grows, highlighting its robustness to redundant or conflicting signals.

\end{document}